\def\eqref#1{equation~\ref{#1}}
\def\1{\bm{1}}
\def\eps{{\epsilon}}
\DeclareMathAlphabet{\mathsfit}{\encodingdefault}{\sfdefault}{m}{sl}
\SetMathAlphabet{\mathsfit}{bold}{\encodingdefault}{\sfdefault}{bx}{n}
\newcommand{\E}{\mathbb{E}}
\DeclareMathAlphabet{\pazocal}{OMS}{zplm}{m}{n}
\newcommand{\G}{\pazocal{G}}
\newcommand{\X}{\pazocal{X}}
\newcommand{\Y}{\pazocal{Y}}
\newcommand{\Z}{\pazocal{Z}}
\newcommand{\M}{\mathcal{M}}
\newcommand{\PP}{\mathcal{M}_+^1}
\newcommand{\B}{\pazocal{B}}
\newcommand{\F}{\pazocal{F}}
\newcommand{\T}{\pazocal{T}}
\newtheorem{theorem}{Theorem}[section]
\newtheorem{lemma}[theorem]{Lemma}
\newtheorem{proposition}[theorem]{Proposition}
\theoremstyle{definition}
\newtheorem{example}[theorem]{Example}
\title{Scalable Unbalanced Optimal Transport using Generative Adversarial Networks}
\author{Karren D. Yang \& Caroline Uhler \\
Laboratory for Information \& Decision Systems \\
Institute for Data, Systems and Society \\
Massachusetts Institute of Technology\\
Cambridge, MA, USA \\
\texttt{\{karren, cuhler\}@mit.edu} \\
}
\begin{document}

\maketitle

\begin{abstract}
Generative adversarial networks (GANs) are an expressive class of neural generative models with tremendous success in modeling high-dimensional continuous measures. In this paper, we present a scalable method for unbalanced optimal transport (OT) based on the generative-adversarial framework. We formulate unbalanced OT as a problem of simultaneously learning a transport map and a scaling factor that push a source measure to a target measure in a cost-optimal manner. We provide theoretical justification for this formulation, showing that it is closely related to an existing static formulation by \cite{liero2018optimal}. We then propose an algorithm for solving this problem based on stochastic alternating gradient updates, similar in practice to GANs, and perform numerical experiments demonstrating how this methodology can be applied to population modeling. 
\end{abstract}

\section{Introduction}

We consider the problem of unbalanced optimal transport: given two measures, find a cost-optimal way to transform one measure to the other using a combination of mass variation and transport. Such problems arise, for example, when modeling the transformation of a source population into a target population (Figure \ref{fig:pop_example}a). In this setting, one needs to model mass transport to account for the features that are evolving, as well as local mass variations to allow sub-populations to become more or less prominent in the target population \citep{schiebinger2017reconstruction}.

Classical optimal transport (OT) considers the problem of pushing a source to a target distribution in a way that is optimal with respect to some transport cost without allowing for mass variations. Modern approaches are based on the Kantorovich formulation \citep{kantorovich1942translocation}, which seeks the optimal probabilistic coupling between measures and can be solved using linear programming methods for discrete measures. Recently, 
\cite{cuturi2013sinkhorn} showed that regularizing the objective using an entropy term allows the dual problem to be solved more efficiently using the Sinkhorn algorithm. Stochastic methods based on the dual objective have been proposed for the continuous setting \citep{genevay2016stochastic, seguy2017large, arjovsky2017wasserstein}. Optimal transport has been applied to many areas, such as computer graphics \citep{ferradans2014regularized, solomon2015convolutional} and domain adaptation \citep{courty2014domain, courty2017optimal}. 

In many applications where a transport cost is not available, transport maps can also be learned using generative models such as generative adversarial networks (GANs) \citep{goodfellow2014generative}, which push a source distribution to a target distribution by training against an adversary. Numerous transport problems in image translation \citep{mirza2014conditional,zhu2017unpaired, yi2017dualgan}, natural language translation \citep{he2016dual}, domain adaptation \citep{bousmalis2017unsupervised} and biological data integration \citep{amodio2018magan} have been tackled using variants of GANs, with strategies such as conditioning or cycle-consistency employed to enforce correspondence between original and transported samples. However, all these methods conserve mass between the source and target and therefore cannot handle mass variation. 

Several formulations have been proposed for extending the theory of OT to the setting where the measures can have unbalanced masses \citep{chizat2015unbalanced, chizat2018interpolating, kondratyev2016new, liero2018optimal, frogner2015learning}. In terms of numerical methods, a class of scaling algorithms \citep{chizat2016scaling} that generalize the Sinkhorn algorithm for balanced OT have been developed for approximating the solution to \emph{optimal entropy-transport} problems; this formulation of unbalanced OT by \citet{liero2018optimal} corresponds to the Kantorovich OT problem in which the hard marginal constraints are relaxed using divergences to allow for mass variation. In practice, these algorithms have been used to approximate unbalanced transport plans between discrete measures for applications such as computer graphics \citep{chizat2016scaling}, tumor growth modeling \citep{chizat2017tumor} and computational biology \citep{schiebinger2017reconstruction}.  However, while optimal entropy-transport allows mass variation, it cannot  explicitly model it, and there are currently no methods that can perform unbalanced OT between \emph{continuous} measures. 

\begin{figure}[t]\label{fig:pop_example}
\centering
\vspace{-0.2cm}
\subfigure[]{\includegraphics[scale=.15]{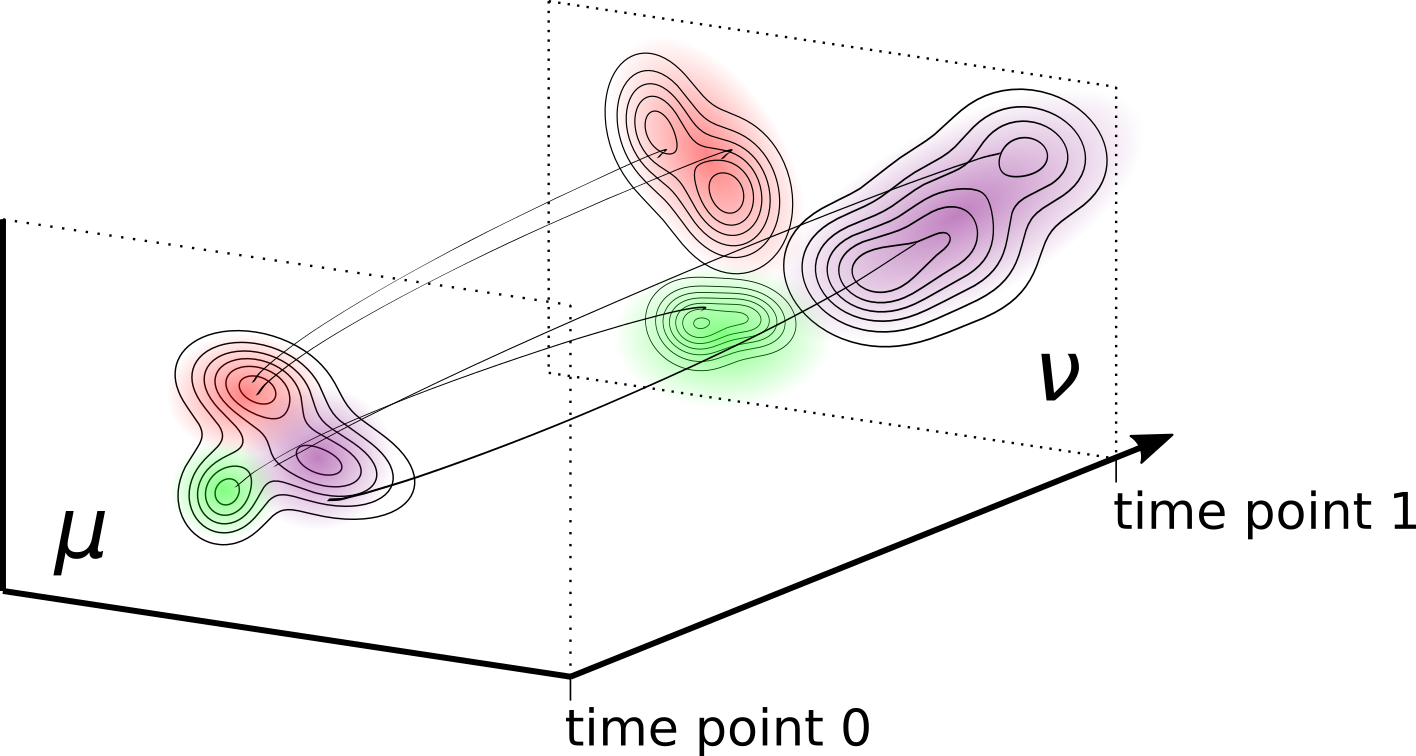}}
\subfigure[]{\includegraphics[scale=.15]{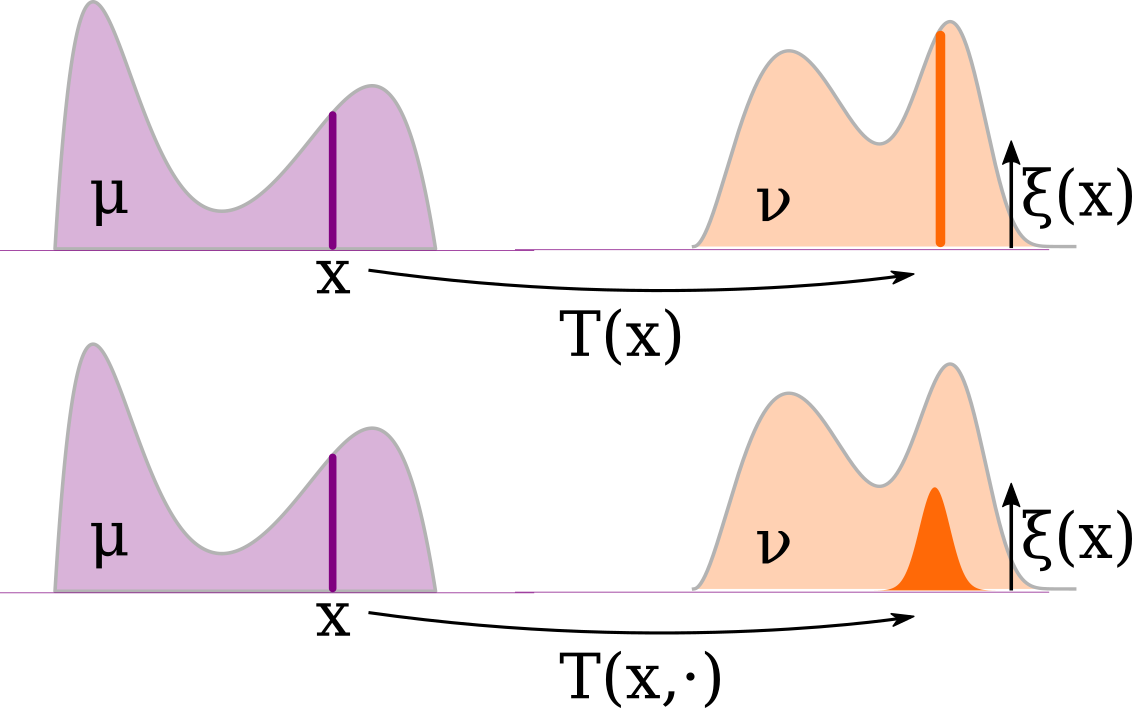}}
\vspace{-0.2cm}
\caption{(a) Illustration of the problem of modeling the transformation of a source population $\mu$ to a target population $\nu$. In this example, one sub-population is growing more rapidly than the others. (b) Schematic of Monge-like formulations of unbalanced optimal transport. The objective is to learn a transport map $T$ (for transporting mass) and scaling factor $\xi$ (for mass variation) to push the source $\mu$ to the target $\nu$, using a deterministic transport map (top) \citep{chizat2015unbalanced} or a stochastic transport map (bottom). }
\end{figure}

{\bf Contributions.} Inspired by the recent successes of GANs for high-dimensional transport problems, we present a novel framework for unbalanced optimal transport that directly models mass variation in addition to transport. Concretely, our contributions are the following:
\begin{itemize}
\item We propose to solve a Monge-like formulation of unbalanced OT, in which the goal is to learn a stochastic transport map and scaling factor to push a source to a target measure in a cost-optimal manner. This generalizes the unbalanced Monge OT problem by \citet{chizat2015unbalanced}.
\item By relaxing this problem, we obtain an alternative form of the optimal entropy-transport problem by \citet{liero2018optimal}, which confers desirable theoretical properties. 
\item We develop scalable methodology for solving the relaxed problem. Our derivation uses a convex conjugate representation of divergences, resulting in an alternating gradient descent method similar to GANs \citep{goodfellow2014generative}.
\item We demonstrate in practice how our methodology can be applied towards population modeling using the MNIST and USPS handwritten digits datasets, the CelebA dataset, and 
a recent single-cell RNA-seq dataset from zebrafish embrogenesis.
\end{itemize}

In addition to these main contributions, for completeness we also propose a new scalable method (Algorithm~\ref{alg:1}) in the Appendix for solving the optimal-entropy transport problem by \citet{liero2018optimal} in the continuous setting. The algorithm extends the work of \citet{seguy2017large} to unbalanced OT and is a scalable alternative to the algorithm of \citet{chizat2016scaling} for very large or continuous datasets.

\section{Preliminaries}

{\bf Notation.} Let $\X, \Y \subseteq \mathbb{R}^n$ be topological spaces and let $\B$ denote the Borel $\sigma$-algebra. 
Let $\PP(\X), \M_+(\X)$ denote respectively the space of probability measures and finite non-negative measures over $\X$. For a measurable function $T$, let $T_{\#}$ denote its pushforward operator: if $\mu$ is a measure, then $T_{\#} \mu$ is the pushforward measure of $\mu$ under $T$. Finally, let $\pi^\X$, $\pi^\Y$ be functions that project onto $\X$ and $\Y$; for a joint measure $\gamma \in \M_+(\X \times \Y)$, $\pi^\X_\#\gamma$ and $\pi^\Y_\#\gamma$ are its marginals with respect to $\X$ and $\Y$ respectively.

{\bf Optimal transport (OT)} addresses the problem of transporting between measures in a cost-optimal manner. \citet{monge1781memoire} formulated this problem as a search over deterministic transport maps. 
Specifically, given $\mu \in \PP(\X), \nu \in \PP(\Y)$ and a cost function $c: \X \times \Y \rightarrow \mathbb{R}^+$, Monge OT seeks a measurable function $T: \X \rightarrow \Y$ minimizing
\begin{equation}
\inf_{T} \int_{\X} c(x,T(x)) ~d\mu(x)
\end{equation}
subject to the constraint $T_{\#}\mu = \nu$. While the optimal $T$ has an intuitive interpretation as an optimal transport map, the Monge problem is non-convex and not always feasible depending on the choices of $\mu$ and $\nu$. The \emph{Kantorovich OT} problem is a convex relaxation of the Monge problem that formulates OT as a search over probabilistic transport plans. Given $\mu \in \PP(\X), \nu \in \PP(\Y)$ and a cost function $c: \X \times \Y \rightarrow \mathbb{R}^+$, Kantorovich OT seeks a joint measure $\gamma \in \PP(\X \times \Y)$ subject to $\pi^\X_\#\gamma=\mu$ and $\pi^\Y_\#\gamma=\nu$ minimizing
\begin{equation}\label{eq:MK}
\begin{split}
W(\mu, \nu) := \inf_{\gamma} \int_{\X \times \Y} c(x,y) ~d\gamma(x,y).
\end{split}
\end{equation}
Note that the conditional probability distributions $\gamma_{y|x}$ specify stochastic maps from $\X$ to $\Y$ and can be considered a ``one-to-many" version of the deterministic map from the Monge problem. In terms of numerical methods, the relaxed problem is a linear program that is always feasible and can be solved in $O(n^3)$ time for discrete $\mu, \nu$. \cite{cuturi2013sinkhorn} recently showed that introducing entropic regularization results in a simpler dual optimization problem that can be solved efficiently using the Sinkhorn algorithm. Based on the entropy-regularized dual problem, \cite{genevay2016stochastic} and \cite{seguy2017large} proposed stochastic algorithms for computing transport plans that can handle continuous measures. 

{\bf Unbalanced OT.} Several formulations that extend classical OT to handle mass variation have been proposed \citep{chizat2015unbalanced, chizat2018interpolating, kondratyev2016new}. Existing numerical methods are based on a Kantorovich-like formulation known as \emph{optimal-entropy transport} \citep{liero2018optimal}. This formulation is obtained by relaxing the marginal constraints of (\ref{eq:MK}) using divergences as follows: given two positive measures $\mu \in \M_+(\X)$ and $\nu \in \M_+(\Y)$ and a cost function $c: \X \times \Y \rightarrow \mathbb{R}^+$, optimal entropy-transport finds a measure $\gamma \in \M_+(\X \times \Y)$ that minimizes
\begin{equation}\label{eq:transport}
\begin{split}
W_{ub}(\mu, \nu) := \inf_\gamma \int_{\X \times \Y} c(x,y) ~d\gamma(x,y) + D_{\psi_1}(\pi^\X_\#\gamma| \mu) 
+ D_{\psi_2}(\pi^\Y_\#\gamma| \nu),
\end{split}
\end{equation}
where $D_{\psi_1}$, $D_{\psi_2}$ are $\psi$-divergences induced by $\psi_1, \psi_2$. The $\psi$-divergence between non-negative finite measures $P,Q$ over $\T \subseteq \mathbb{R}^d$ induced by a lower semi-continuous, convex entropy function $\psi: \mathbb{R} \rightarrow \mathbb{R} \cup \{ \infty \}$~is
\begin{equation}\label{eq:div}
D_{\psi}(P | Q) := \psi'_{\infty}P^\perp(\T) + \int_{\T} \psi \left(\frac{dP}{dQ} \right) dQ, 
\end{equation} 
where $\psi'_{\infty} := \lim_{s \rightarrow \infty} \frac{\psi(s)}{s}$ and $\frac{dP}{dQ} Q + P^\perp$ is the Lebesgue decomposition of $P$ with respect to $Q$. Note that mass variation is allowed since the marginals of $\gamma$ are not constrained to be $\mu$ and $\nu$. 
In terms of numerical methods, 
the state-of-the-art in the discrete setting is a class of iterative scaling algorithms \citep{chizat2016scaling} that generalize the Sinkhorn algorithm for computing regularized OT plans \citep{cuturi2013sinkhorn}. 
There are no practical algorithms for unbalanced OT between continuous measures, especially in high-dimensional spaces.

\begin{figure}[t]
\centering
\vspace{-0.3cm}
\subfigure[]{\includegraphics[scale=.25]{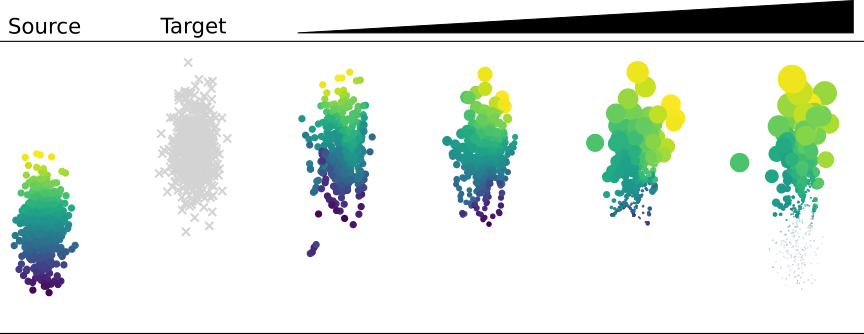}}
\subfigure[]{\includegraphics[scale=.25]{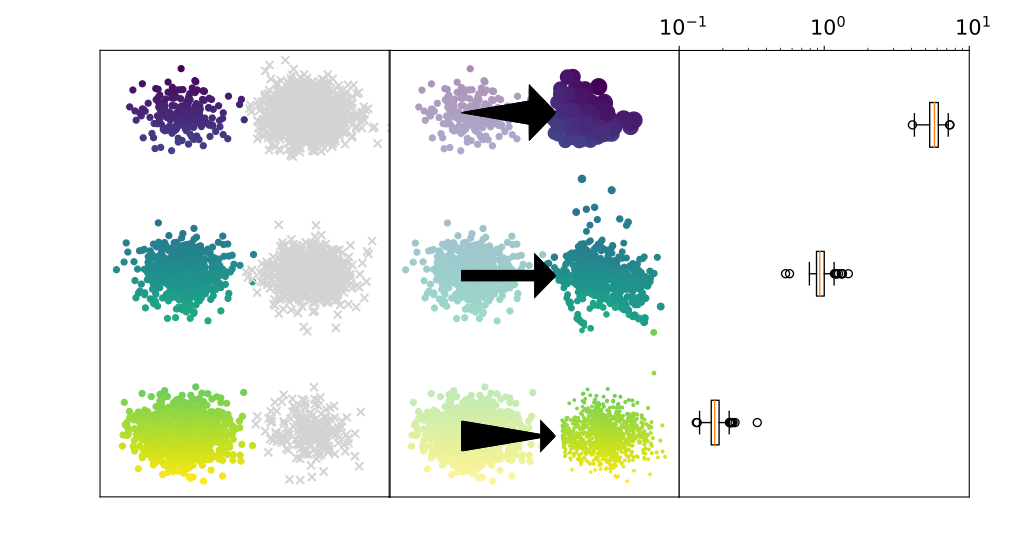}}
\vspace{-0.2cm}
\caption{{\bf Motivating examples for Unbalanced Monge OT.} }
\vspace{-0.1cm}
\label{fig:syn_data}
\end{figure}

\section{Scalable Unbalanced OT using GANs} \label{sec:met2}
In this section, we propose the first algorithm for unbalanced OT that directly models mass variation and can be applied towards transport between high-dimensional continuous measures. The starting point of our development is the following Monge-like formulation of unbalanced OT, in which the goal is to learn a stochastic transport map and scaling factor to push a source to a target measure in a cost-optimal manner.

{\bf Unbalanced Monge OT.} Let $c_1: \X \times \Y \rightarrow \mathbb{R}^+$ be the cost of transport and $c_2: \mathbb{R}^+ \rightarrow \mathbb{R}^+$ the cost of mass variation. Let the probability space $(\Z, \B(\Z), \lambda)$ be the source of randomness in the transport map $T$. Given two positive measures $\mu \in \M_+(\X)$ and $\nu \in \M_+(\Y)$, we seek a transport map $T: \X \times \Z \rightarrow \Y$ and a scaling factor $\xi: \X \rightarrow \mathbb{R}^+$ minimizing
\begin{align}\label{eq:ubOT_exact}
&L(\mu, \nu) := \inf_{T, \xi} \int_\X \int_\Z c_1(x, T(x,z)) d\lambda(z) \xi(x) d\mu(x) + \int_\X c_2(\xi(x)) d\mu(x),
\end{align}
subject to the constraint $T_{\#}(\xi\mu \times \lambda) = \nu$. Concretely, the first and second terms of (\ref{eq:ubOT_exact}) penalize the cost of mass transport and variation respectively, and the equality constraint ensures that $(T, \xi)$ pushes $\mu$ to $\nu$ exactly. A special case of (\ref{eq:ubOT_exact}) is the unbalanced Monge OT problem by \citet{chizat2015unbalanced}, which employs a deterministic transport map (Figure \ref{fig:pop_example}b). We consider the more general case of stochastic (i.e. one-to-many) maps because it is a more suitable model for many practical problems. For example, in cell biology, it is natural to think of one cell in a source population as potentially giving rise to multiple cells in a target population. In practice, one can take $\Z = \mathbb{R}^n$ and $\lambda$ to be the standard Gaussian measure if a stochastic map is desired; otherwise $\lambda$ can be set to a deterministic distribution. The following are examples of problems that can be modeled using unbalanced Monge OT.

\begin{example}[Figure \ref{fig:syn_data}a] Suppose the objective is to model the transformation from a source measure (Column 1) to the target measure (Column 2), which represent a population of interest at two distinct time points. The transport map $T$ models the transport/movement of points from the source to the target, while the scaling factor $\xi$ models the growth (replication) or shrinkage (death) of these points. Different models of transformation are optimal depending on the relative costs of mass transport and variation (Columns 3-6). 
\end{example}
\begin{example}[Figure \ref{fig:syn_data}b] Suppose the objective is to transport points from a source measure (1st panel, color) to a target measure (1st panel, grey) in the presence of class imbalances. A pure transport map would muddle together points from different classes, while an unbalanced transport map with a scaling factor is able to ameliorate the class imbalance (2nd panel). In this case, the scaling factor tells us explicitly how to downweigh or upweigh samples in the source distribution to balance the classes with the target distribution (3rd panel).
\end{example}

{\bf Relaxation.} 
From an optimization standpoint, it is challenging to satisfy the constraint $T_{\#}(\xi\mu \times \lambda) = \nu$. We hence consider the following relaxation of (\ref{eq:ubOT_exact}) using a divergence penalty in place of the equality constraint:
\begin{equation}\label{eq:ubOT}
L_\psi(\mu, \nu) := \inf_{T, \xi} \int_\X \int_\Z c_1(x, T(x,z)) d\lambda(z) \xi(x) d\mu(x) + \int_\X c_2(\xi(x)) d\mu(x) + D_\psi(T_{\#}(\xi\mu \times \lambda) | \nu),
\end{equation}
using an appropriate choice of $\psi$ that satisfies the requirements of Lemma \ref{lem:div} in the Appendix%
\footnote{Note that $D_\psi(T_{\#}(\xi\mu \times \lambda) | \nu) = 0 \nRightarrow T_{\#}(\xi\mu \times \lambda) = \nu$ in general since the total mass of the transported measure is not constrained; for the relaxation to be valid, $\psi(s)$ should attain a unique minimum at $s=1$ (see Lemma \ref{lem:div}).}.%
This relaxation is the Monge-like version of the optimal-entropy transport problem (\ref{eq:transport}) by \citet{liero2018optimal}. Specifically, $(T, \xi)$ specifies a joint measure $\gamma \in \M_+(\X \times \Y)$ given by
\[
\gamma(C) := \int_\X \int_\Z \mathbbm{1}_C(x,T(x,z)) d\lambda(z) \xi(x) d\mu(x), \quad \forall C \in \B(\X) \times \B(\Y),
\]
and by reformulating (\ref{eq:ubOT}) in terms of $\gamma$ instead of $(T, \xi)$, one obtains the objective function for optimal-entropy transport. The main difference between the formulations is their search space, since not all joint measures $\gamma \in \M_+(\X \times \Y)$ can be specified by some choice of $(T, \xi)$. For example, if $T$ is a deterministic transport map, then $\gamma$ is necessarily restricted to the set of deterministic couplings. Even if $T$ is sufficiently random, it is generally not possible to specify all joint measures $\gamma \in \M_+(\X \times \Y)$: in the asymmetric Monge formulation (\ref{eq:ubOT}), all the mass transported to $\Y$ must come from somewhere within the support of $\mu$, since the scaling factor $\xi$ allows mass to grow but not to materialize outside of its original support. Therefore equivalence can be established in general only when restricting the support of $\gamma$ to $supp(\mu) \times \Y$ as described in the following lemma, whose proof is given in the Appendix. 

\begin{lemma} \label{lem:opt-entropy} Let $\G$ be the set of joint measures supported on $supp(\mu) \times \Y$%
, and define
\begin{equation*}\label{eq:OET}
\tilde W_{c, \psi_1, \psi_2}(\mu,\nu):= \inf_{\gamma \in \G} \int c ~d\gamma + D_{\psi_1}(\pi^\X_\#\gamma | \mu) + D_{\psi_2}(\pi^\Y_\#\gamma | \nu).
\end{equation*}
If $(\Z, \B(\Z), \lambda)$ is atomless and $c_2$ is an entropy function, then $L_\psi(\mu, \nu) = \tilde W_{c_1, c_2, \psi}(\mu, \nu)$.

\end{lemma}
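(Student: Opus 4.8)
I would prove the identity $L_\psi(\mu,\nu)=\tilde W_{c_1,c_2,\psi}(\mu,\nu)$ by establishing the two inequalities separately. For $L_\psi\ge\tilde W$, fix any admissible pair $(T,\xi)$ and let $\gamma$ be the joint measure it induces, exactly as defined just before the lemma. Then $\gamma$ is concentrated on $\mathrm{supp}(\mu)\times\Y$, so $\gamma\in\G$. The plan is to check term by term that the $(T,\xi)$-objective in (\ref{eq:ubOT}) equals the $\gamma$-objective defining $\tilde W$: (i) by Fubini and the definition of the pushforward, $\int c_1\,d\gamma=\int_\X\int_\Z c_1(x,T(x,z))\,d\lambda(z)\,\xi(x)\,d\mu(x)$; (ii) $\pi^\X_\#\gamma=\xi\mu$ is absolutely continuous with respect to $\mu$ with density $\xi$, so its singular part vanishes and, using that $c_2$ is an entropy function, the definition (\ref{eq:div}) gives $D_{c_2}(\pi^\X_\#\gamma\,|\,\mu)=\int_\X c_2(\xi(x))\,d\mu(x)$; (iii) $\pi^\Y_\#\gamma=T_\#(\xi\mu\times\lambda)$, so the remaining terms coincide verbatim. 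Taking the infimum over $\gamma\in\G$ and then over $(T,\xi)$ yields $L_\psi\ge\tilde W$.

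\textbf{The reverse inequality, absolutely continuous case.} For $L_\psi\le\tilde W$ it suffices to show that for every $\gamma\in\G$ of finite objective and every $\varepsilon>0$ there is an admissible $(T,\xi)$ whose objective is at most that of $\gamma$ plus $\varepsilon$. Write $\mu_\gamma:=\pi^\X_\#\gamma$, and suppose first $\mu_\gamma\ll\mu$. Set $\xi:=d\mu_\gamma/d\mu$ and disintegrate $\gamma=\int_\X\gamma_x\otimes\delta_x\,d\mu_\gamma(x)$ with $\gamma_x\in\PP(\Y)$ (valid since $\X,\Y$ are standard Borel). Since $(\Z,\B(\Z),\lambda)$ is atomless, there is a measurable $f:\Z\to[0,1]$ with $f_\#\lambda=\mathrm{Leb}$, and a jointly measurable representation $g:[0,1]\times\X\to\Y$ with $g(\cdot,x)_\#\mathrm{Leb}=\gamma_x$; setting $T(x,z):=g(f(z),x)$ gives a map with $T(x,\cdot)_\#\lambda=\gamma_x$ for $\mu$-a.e.\ $x$. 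Then $\xi\mu=\mu_\gamma$, $T_\#(\xi\mu\times\lambda)=\pi^\Y_\#\gamma$, and the transport-cost terms agree as in (i), so the objective of $(T,\xi)$ equals that of $\gamma$ exactly. This case uses the atomlessness hypothesis in an essential way.

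\textbf{The singular case and the main obstacle.} The crux is when $\mu_\gamma$ has a nonzero component $\mu_\gamma^\perp$ singular with respect to $\mu$: then $\pi^\X_\#\gamma$ cannot be written as $\xi\mu$, so the previous construction does not directly apply. If $(c_2)'_\infty=\infty$ then $D_{c_2}(\mu_\gamma\,|\,\mu)=\infty$ and there is nothing to prove, so assume $(c_2)'_\infty<\infty$. Since $\gamma\in\G$, the singular mass lives on a Borel set $N\subseteq\mathrm{supp}(\mu)$ with $\mu(N)=0$, so the carrier of $\mu_\gamma^\perp$ has arbitrarily small $\mu$-neighborhoods of positive measure. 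The plan is to approximate $\gamma$ by ``smearing'' its singular part: split $\gamma=\gamma|_{(\X\setminus N)\times\Y}+\gamma|_{N\times\Y}$, disintegrate the second piece over $\mu_\gamma^\perp$, and replace each $\X$-conditional $\delta_x$ (on a compact, finitely-covered approximation of the carrier of $\mu_\gamma^\perp$) by $\rho_{x,r}:=\mu|_{B(x_i,r)}/\mu(B(x_i,r))$ for a representative $x_i$ of the covering piece containing $x$, while keeping the $\Y$-conditionals $\gamma_x$ unchanged; call the result $\gamma_r\in\G$. By construction $\pi^\Y_\#\gamma_r=\pi^\Y_\#\gamma$, so the $D_\psi(\,\cdot\,|\,\nu)$ term is unchanged; continuity of $c_1$ gives $\int c_1\,d\gamma_r\to\int c_1\,d\gamma$; and writing the density of $\pi^\X_\#\gamma_r$ as $\xi_0+h_r$, where $h_r\ge 0$ is supported on a set of $\mu$-measure tending to $0$ with $\int h_r\,d\mu$ equal to the smeared mass, the definition $(c_2)'_\infty=\lim_{s\to\infty}c_2(s)/s$ together with convexity of $c_2$ yields $\int c_2(\xi_0+h_r)\,d\mu\to\int c_2(\xi_0)\,d\mu+(c_2)'_\infty\,\mu_\gamma^\perp(\X)=D_{c_2}(\mu_\gamma\,|\,\mu)$. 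Hence $\gamma_r$ has $\X$-marginal absolutely continuous with respect to $\mu$ and objective converging to that of $\gamma$, and applying the previous case to $\gamma_r$ for small $r$ closes the gap. I expect this last step to be the main technical burden: choosing the smearing measures measurably in $x$ and, above all, verifying the convexity/asymptotic estimate for the $c_2$-term and the convergence of the transport cost under whatever continuity assumptions on $c_1$ are in force. The parametrized measurable construction of $T$ in the absolutely continuous case is the second point requiring care.
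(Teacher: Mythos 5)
Your two-inequality scheme and both core constructions coincide with the paper's own proof: for $L_\psi \ge \tilde W$ you push $(T,\xi)$ to the induced joint measure $\gamma$ and match the three terms (with $\xi$ the Radon--Nikodym derivative of $\pi^\X_\#\gamma$ with respect to $\mu$ and $\pi^\Y_\#\gamma = T_\#(\xi\mu\times\lambda)$), and for the converse you disintegrate $\gamma$ and use atomlessness of $(\Z,\B(\Z),\lambda)$ to realize each conditional $\gamma_{y|x}$ as a pushforward of $\lambda$, exactly as the paper does via the cited results of Dudley. The genuine difference is the case $\pi^\X_\#\gamma \not\ll \mu$: the paper disposes of it in one line by reading ``supported on $\mathrm{supp}(\mu)\times\Y$'' as $\pi^\X_\#\gamma \ll \mu$ (in effect defining $\G$ that way), whereas you correctly observe that support containment does not imply absolute continuity and propose a smearing approximation whose $c_2$-cost is priced by the recession slope via the convexity estimate $c_2(a+b)\le c_2(a)+(c_2)'_\infty\, b$; the $\psi$-term is untouched because only the $\X$-coordinate is moved. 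This addresses a point the paper's proof leaves unargued, so your route is strictly more careful there. Two caveats if you want that extra case to be airtight: the convergence $\int c_1\, d\gamma_r \to \int c_1\, d\gamma$ requires continuity plus some domination or local integrability of $c_1$ that the lemma does not assume, and a finite cover of a compact approximation of the carrier of $\mu_\gamma^\perp$ leaves unsmeared singular mass, so the $\X$-marginal of $\gamma_r$ is still not absolutely continuous; you would need a countable measurable partition (or a measurable selection of smearing kernels) absorbing all of $\mu_\gamma^\perp$. The cheaper fix, and evidently the intended reading of the lemma, is to take $\G$ to be the plans with $\pi^\X_\#\gamma \ll \mu$, in which case your absolutely continuous case alone reproduces the paper's argument.
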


Based on the relation between (\ref{eq:transport}) and (\ref{eq:ubOT}), several theoretical results for (\ref{eq:ubOT}) follow from the analysis of optimal entropy-transport by \citet{liero2018optimal}. Importantly, one can show the following theorem, namely that for an appropriate and sufficiently large choice of divergence penalty, solutions of the relaxed problem (\ref{eq:ubOT}) converge to solutions of the original problem (\ref{eq:ubOT_exact}). The proof is given in the Appendix.

\begin{theorem} \label{prop:relaxed}
Suppose $c_1, c_2, \psi$ satisfy the existence assumptions of Proposition \ref{prop:wellposed} in the Appendix, and let $(\Z, \B(\Z), \lambda)$ be an atomless probability space. Furthermore, let $\psi$ be uniquely minimized at $\psi(1)=0$. Then for a sequence $0 < \zeta^1< \cdots < \zeta^k < \cdots$ diverging to $\infty$ indexed by $k$, $\lim_{k \rightarrow \infty} L_{\zeta^k \psi}(\mu, \nu) = L(\mu,\nu)$. Additionally, let $\gamma^k$ be the joint measure specified by a minimizer of $L_{\zeta^k \psi}(\mu, \nu)$. If $L(\mu,\nu)<\infty$, then up to extraction of a subsequence, $\gamma^k$ converges weakly to $\gamma$, the joint measure specified by a minimizer of $L(\mu,\nu)$. 
\end{theorem}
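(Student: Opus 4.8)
The plan is to recast both problems in terms of joint measures and then run a compactness-plus-lower-semicontinuity argument (in the spirit of $\Gamma$-convergence). First, using Lemma~\ref{lem:opt-entropy} — and the analogous Monge--Kantorovich equivalence for the hard-constrained problem, valid by the same argument because $\lambda$ is atomless and $c_2$ is an entropy function — I would write
\[
L_{\zeta^k\psi}(\mu,\nu)=\min_{\gamma\in\G}F_k(\gamma),\qquad F_k(\gamma):=\int c_1\,d\gamma+D_{c_2}(\pi^\X_\#\gamma\,|\,\mu)+\zeta^k D_\psi(\pi^\Y_\#\gamma\,|\,\nu),
\]
with a minimizer $\gamma^k$ supplied by Proposition~\ref{prop:wellposed}, and likewise
\[
L(\mu,\nu)=\inf\bigl\{F_\infty(\gamma):\gamma\in\G,\ \pi^\Y_\#\gamma=\nu\bigr\},\qquad F_\infty(\gamma):=\int c_1\,d\gamma+D_{c_2}(\pi^\X_\#\gamma\,|\,\mu).
\]
Since $\zeta^k\uparrow\infty$ the functionals $F_k$ increase pointwise, so $k\mapsto L_{\zeta^k\psi}(\mu,\nu)$ is nondecreasing; and because $\psi(1)=0$ forces $D_\psi(\nu\,|\,\nu)=0$, every competitor for the constrained problem is admissible in $F_k$ with unchanged value, giving $L_{\zeta^k\psi}(\mu,\nu)\le L(\mu,\nu)$. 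Hence $\ell:=\lim_k L_{\zeta^k\psi}(\mu,\nu)$ exists with $\ell\le L(\mu,\nu)$, and the work is the reverse inequality; if $\ell=\infty$ there is nothing to prove, so assume $\ell<\infty$.

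Next I would establish compactness of $\{\gamma^k\}$. From $F_k(\gamma^k)=L_{\zeta^k\psi}(\mu,\nu)\le\ell$ we get $D_{c_2}(\pi^\X_\#\gamma^k\,|\,\mu)\le\ell$ and $D_\psi(\pi^\Y_\#\gamma^k\,|\,\nu)\le\ell/\zeta^k\to0$. Together with the superlinearity of the entropy functions (part of the assumptions of Proposition~\ref{prop:wellposed}) and the tightness of $\mu$ and $\nu$, this bounds the masses of the marginals and makes $\{\pi^\X_\#\gamma^k\}$ and $\{\pi^\Y_\#\gamma^k\}$ tight, since sublevel sets of a $\psi$-divergence against a fixed tight reference measure are tight when the entropy is superlinear (a de la Vall\'ee-Poussin / Dunford--Pettis argument, as in \citet{liero2018optimal}). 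Tightness of the marginals gives tightness of $\{\gamma^k\}$ on $\X\times\Y$, so by Prokhorov a subsequence converges weakly, $\gamma^k\rightharpoonup\gamma$. The complement of $supp(\mu)\times\Y$ is open and carries no mass of any $\gamma^k$, hence none of $\gamma$, so $\gamma\in\G$. Moreover, weak convergence of $\pi^\Y_\#\gamma^k$ to $\pi^\Y_\#\gamma$ and lower semicontinuity of $D_\psi(\cdot\,|\,\nu)$ give $D_\psi(\pi^\Y_\#\gamma\,|\,\nu)\le\liminf_k D_\psi(\pi^\Y_\#\gamma^k\,|\,\nu)=0$; since $\psi$ attains its minimum $0$ uniquely at $s=1$, this forces $\pi^\Y_\#\gamma=\nu$, i.e.\ $\gamma$ is feasible for the constrained problem.

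Finally I would close the argument by lower semicontinuity of the remaining terms: $c_1$ is nonnegative and lower semicontinuous and $D_{c_2}(\cdot\,|\,\mu)$ is lower semicontinuous under weak convergence, so $F_\infty(\gamma)\le\liminf_k F_\infty(\gamma^k)\le\liminf_k F_k(\gamma^k)=\ell$. Combined with feasibility of $\gamma$ this yields $L(\mu,\nu)\le F_\infty(\gamma)\le\ell\le L(\mu,\nu)$, so equality holds throughout; this proves $\lim_k L_{\zeta^k\psi}(\mu,\nu)=L(\mu,\nu)$ and shows $\gamma$ minimizes the $\gamma$-form of $L(\mu,\nu)$. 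When $L(\mu,\nu)<\infty$, finiteness of $F_\infty(\gamma)$ together with $(c_2)'_\infty=\infty$ forces $\pi^\X_\#\gamma\ll\mu$, and disintegrating $\gamma$ over $\pi^\X_\#\gamma$ and writing the conditionals as pushforwards of $\lambda$ (the atomless selection argument from the proof of Lemma~\ref{lem:opt-entropy}) recovers a minimizer $(T,\xi)$ of $L(\mu,\nu)$ whose induced joint measure is $\gamma$, giving the asserted subsequential weak convergence $\gamma^k\rightharpoonup\gamma$. The main obstacle is the compactness step: extracting tightness of the near-minimizers $\{\gamma^k\}$ from the uniform bounds on the divergence terms, which is exactly where the superlinearity/coercivity hypotheses of Proposition~\ref{prop:wellposed} are needed; the lower semicontinuity of $\psi$-divergences under weak convergence is the other key input, but it is standard and available from \citet{liero2018optimal}.
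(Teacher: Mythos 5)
Your overall strategy is essentially the paper's: pass to the joint-measure formulation via Lemma~\ref{lem:opt-entropy} (also for the hard-constrained problem, viewed as the entropy $\iota_{=}$), get the one-sided bound $L_{\zeta^k\psi}(\mu,\nu)\le L(\mu,\nu)$ from $\psi(1)=0$, then extract a weakly convergent subsequence of the minimizers $\gamma^k$ and use lower semicontinuity together with $D_\psi(\pi^\Y_\#\bar\gamma\,|\,\nu)=0\Rightarrow\pi^\Y_\#\bar\gamma=\nu$ to identify the limit as a minimizer of the constrained problem. Recovering the value convergence from this compactness argument, rather than quoting the $\Gamma$-liminf result (Lemma~3.9 of \citet{liero2018optimal}) as the paper does, is a harmless and even more self-contained variation.

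There is, however, a genuine gap in your compactness step. You justify tightness (and mass bounds) of $\{\gamma^k\}$ by ``superlinearity of the entropy functions,'' describing it as part of the assumptions of Proposition~\ref{prop:wellposed}; but superlinearity is only alternative (ii) there. The theorem also admits alternative (i), namely $c_1$ with compact sublevel sets and $c_{2\infty}'+\psi_\infty'>0$, under which $c_{2\infty}'$ and $\psi_\infty'$ may be finite, the divergence sublevel sets need not be tight, and the de la Vall\'ee-Poussin/Dunford--Pettis argument does not apply. The paper treats case (i) separately: equal tightness follows from the Markov inequality applied to the transport term, $\gamma^k(\{(x,y): c_1(x,y)>\lambda\})\le \frac{1}{\lambda}\int c_1\,d\gamma^k\le C/\lambda$, combined with compactness of the sublevel sets of $c_1$, while boundedness of the masses is taken from Proposition~2.10 of \citet{liero2018optimal} (note $\psi_\infty'>0$ is automatic from convexity and the unique minimum at $1$). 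Since you already have $\int c_1\,d\gamma^k\le\ell$, the repair is immediate, but as written your proof covers only case (ii). The same caveat affects your final step: you invoke $(c_2)'_\infty=\infty$ to force $\pi^\X_\#\bar\gamma\ll\mu$, which is unavailable under (i); there the passage back from $\bar\gamma$ to a pair $(T,\xi)$ must rely on the restriction to $supp(\mu)\times\Y$ built into $\G$ and the construction in the proof of Lemma~\ref{lem:opt-entropy}, which is how the paper argues.
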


{\bf Algorithm.} Using the relaxation of unbalanced Monge OT in (\ref{eq:ubOT}), we now show that the transport~map~and~scaling factor can be learned by stochastic gradient methods. While the divergence term cannot easily~be~minimized~using the definition in (\ref{eq:div}), we can write it as a penalty witnessed by an \emph{adversary} function $f: \Y \rightarrow (-\infty, \psi'_{\infty}]$ using the convex conjugate representation (see Lemma \ref{lem:vc2}):
\begin{equation}\label{eq:var-div}
D_\psi(T_{\#}(\xi\mu \times \lambda) | \nu) = \sup_f \int_\X \int_\Z f(T(x, z)) d\lambda(z) \xi(x) d\mu(x)  - \int_\Y \psi^*(f(y)) d\nu(y),
\end{equation}
where $\psi^*$ is the convex conjugate of $\psi$. The objective in (\ref{eq:ubOT}) can now be optimized using alternating stochastic gradient updates after parameterizing $T$, $\xi$, and $f$ with neural networks; see Algorithm \ref{alg:TAG} %
   \footnote{We assume that one has access to samples from $\mu, \nu$, and in the setting where $\mu, \nu$ are not normalized, then samples to the normalized measures $\tilde \mu, \tilde \nu$ as well as the normalization constants $c_\mu, c_\nu$. These are reasonable assumptions for practical applications: for example, in a biological assay, one might collect $c_\mu$ cells from time point 1 and $c_\nu$ cells from time point 2. In this case, the samples are the measurements of each cell and the normalization constants are $c_\mu, c_\nu$. }. %
The optimization procedure is similar to GAN training and can be interpreted as an adversarial game between $(T, \xi)$ and $f$:\\

\begin{itemize}
\itemsep0em 
\item $T$ takes a point $x \sim \mu$ and transports it from $\X$ to $\Y$ by generating $T(x, z)$ where $z \sim \lambda$.
\item $\xi$ determines the importance weight of each transported point.
\item Their shared objective is to minimize the divergence between transported samples and real samples from $\nu$ that is measured by the adversary $f$.
\item Additionally, cost functions $c_1$ and $c_2$ encourage $T, \xi$ to find the most cost-efficient strategy.
\end{itemize}

\vspace{-0.2cm}

\begin{algorithm}[H]
\caption{Generative-Adversarial Framework for Unbalanced Monge OT}
\begin{algorithmic}
   \STATE {\bfseries Input:} Initial parameters $\theta$, $\phi$, $\omega$; step size $\eta$; normalized measures $\tilde \mu$, $\tilde \nu$, constants $c_\mu, c_\nu$.

   \STATE {\bfseries Output:} Updated parameters $\theta$, $\phi$, $\omega$.
   \WHILE{($\theta$, $\phi$, $\omega$) not converged}
   
   \STATE Sample $x_1, \cdots, x_n$ from $\tilde \mu$, $y_1, \cdots, y_n$ from $\tilde \nu$, $z_1, \cdots, z_n$ from $\lambda$;
   \vspace{-0.3cm}
   \begin{align}\label{eq:loss}
   \ell(\theta, \phi, \omega):= \frac{1}{n} \sum_{i=1}^n [ c_\mu c_1(x_i, T_\theta(x_i, z_i))\xi_\phi(x_i) &+ c_\mu c_2(\xi_\phi(x_i)) \\
   &+ c_\mu \xi_\phi(x_i) f_\omega(T_\theta(x_i, z_i)) - c_\nu \psi^*(f_\omega(y_i)). ] \nonumber
   \end{align}   
   \vspace{-0.5cm}
   \STATE Update $\omega$ by gradient descent on $-\ell(\theta, \phi, \omega)$.
   \STATE Update $\theta, \phi$ by gradient descent $\ell(\theta, \phi, \omega)$.

   \ENDWHILE
\end{algorithmic}
\label{alg:TAG}
\end{algorithm}

\vspace{-0.1cm}

Table \ref{tab:div} in the Appendix provides some examples of divergences with corresponding entropy functions and convex conjugates that can be plugged into (\ref{eq:var-div}). Further practical considerations for implementation and training are discussed in Appendix \ref{sec:prac}. 

{\bf Relation to other approaches.} The probabilistic Monge-like formulation (\ref{eq:ubOT}) is similar to the Kantorovich-like entropy-transport problem (\ref{eq:transport}) in theory, but they result in quite different numerical methods in practice. Algorithm \ref{alg:TAG} solves the non-convex formulation (\ref{eq:ubOT}) and learns a transport map $T$ and scaling factor $\xi$ parameterized by neural networks, enabling scalable optimization using stochastic gradient descent. The networks are immediately useful for many practical applications; for instance, it only requires a single forward pass to compute the transport and scaling of a point from the source domain to the target. Furthermore, the neural architectures of $T, \xi$ imbue their function classes with a particular structure, and when chosen appropriately, enable effective learning of these functions in high-dimensional settings. Due to the non-convexity of the optimization problem, however, Algorithm \ref{alg:TAG} is not guaranteed to find the global optimum. In contrast, the scaling algorithm of \citet{chizat2016scaling} based on (\ref{eq:transport}) solves a convex optimization problem and is proven to converge, but is currently only practical for discrete problems and has limited scalability. For completeness, in Section \ref{sec:met1} of the Appendix, we propose a \emph{new stochastic method} based on the same dual objective as \citet{chizat2016scaling} that can handle transport between continuous measures (Algorithm \ref{alg:1} in the Appendix). This method generalizes the approach of \citet{seguy2017large} for handling transport between continuous measures and overcomes the scalability limitations of \citet{chizat2016scaling}. However, the output is in the form of the dual solution, which is less interpretable for practical applications compared to the output of Algorithm~\ref{alg:TAG}. In particular, while one can compute a deterministic transport map known as a barycentric projection from the dual solution, it is unclear how best to obtain a scaling factor or a stochastic transport map that can generate samples outside of the target dataset. In the numerical experiments of Section \ref{sec:exp}, we show the advantage of directly learning a transport map and scaling factor using Algorithm \ref{alg:TAG}.

The problem of learning a scaling factor (or weighting factor) that ``balances" measures $\mu$ and $\nu$ also arises in causal inference. Generally, $\mu$ is the distribution of covariates from a control population and $\nu$ is the distribution from a treated population. The goal is to scale the importance of different members from the control population based on how likely they are to be present in the treated population, in order to eliminate selection biases in the inference of treatment effects. \cite{kallus2018deepmatch} proposed a generative-adversarial method for learning the scaling factor, but they do not consider transport.

\section{Numerical Experiments} \label{sec:exp}

In this section, we illustrate in practice how Algorithm \ref{alg:TAG} performs unbalanced OT, with applications geared towards population modeling.

{\bf MNIST-to-MNIST.} We first apply Algorithm \ref{alg:TAG} to perform unbalanced optimal transport between two modified MNIST datasets. The source dataset consists of regular MNIST digits with the class distribution shown in column 1 of Figure \ref{fig:mnist_data}a. The target dataset consists of either regular (for the experiment in Figure \ref{fig:mnist_data}b) or dimmed (for the experiment in Figure \ref{fig:mnist_data}c) MNIST digits with the class distribution shown in column 2 of Figure \ref{fig:mnist_data}a. The class imbalance between the source and target datasets imitates a scenerio in which certain classes (digits 0-3) become more popular and others (6-9) become less popular in the target population, while the change in brightness is meant to reflect population drift. We evaluated Algorithm \ref{alg:TAG} on the problem of transporting the source distribution to the target distribution, enforcing a high cost of transport (w.r.t. Euclidean distance). In both cases, we found that the scaling factor over each of the digit classes roughly reflects its ratio of imbalance between the source and target distributions (Figure \ref{fig:mnist_data}b-c). These experiments validate that the scaling factor learned by Algorithm \ref{alg:TAG} reflects the class imbalances and can be used to model growth or decline of different classes in a population. Figure \ref{fig:mnist_data}d is a schematic illustrating the reweighting that occurs during unbalanced OT.

\begin{figure}
\vspace{-0.2cm}
\subfigure[]{\includegraphics[scale=.55]{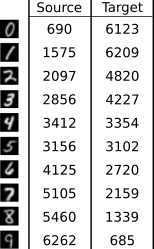}} \quad
\subfigure[]{\includegraphics[scale=.55]{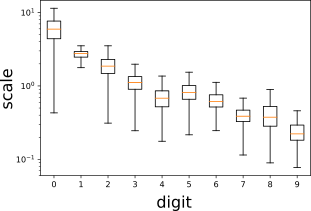}}\quad
\subfigure[]{\includegraphics[scale=.55]{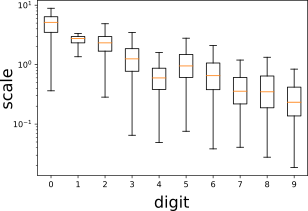}}\quad\quad
\subfigure[]{\includegraphics[scale=.4]{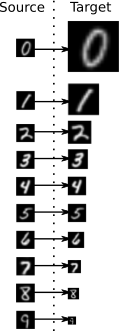}}
\vspace{-0.2cm}
\caption{Learning weights on MNIST data using unbalanced OT.}
\vspace{-0.2cm}
\label{fig:mnist_data}
\end{figure}

{\bf MNIST-to-USPS.} Next, we apply unbalanced OT from the MNIST dataset to the USPS dataset. As before, these two datasets are meant to imitate a population sampled at two different time points, this time with a large degree of evolution. We use Algorithm \ref{alg:TAG} to model the evolution of the MNIST distribution to the USPS distribution, taking as transport cost the Euclidean distance between the original and transported images. A summary of the unbalanced transport is visualized in Figure \ref{fig:num_data}a. Each arrow originates from a real MNIST image and points towards the predicted appearance of this image in the USPS dataset. The size of the image reflects the scaling factor of the original MNIST image, i.e.~whether it is relatively increasing or decreasing in prominence in the USPS dataset compared to the MNIST dataset according to the unbalanced OT model. Even though the Euclidean distance is not an ideal measure of correspondence between MNIST and USPS digits, many MNIST digits were able to preserve their likeness during the transport (Figure \ref{fig:num_data}b). We analyzed which MNIST digits were considered as increasing or decreasing in prominence by the model. The MNIST digits with higher scaling factors were generally brighter (Figure \ref{fig:num_data}c) and covered a larger area of pixels (Figure \ref{fig:num_data}d) compared to the MNIST digits with lower scaling factors. These results are consistent with the observation that the target USPS digits are generally brighter and contain more pixels.

\begin{figure}[h]
\vspace{-0.2cm}
\centering
\subfigure[]{\includegraphics[scale=.4]{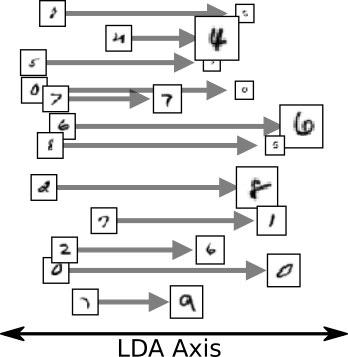}} \quad
\subfigure[]{\includegraphics[scale=.4]{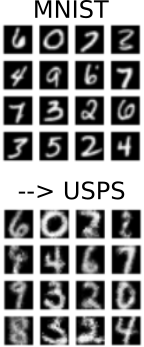}}\quad
\subfigure[]{\includegraphics[scale=.3]{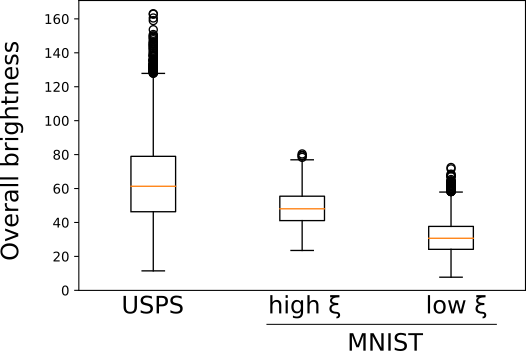}}\quad
\subfigure[]{\includegraphics[scale=.3]{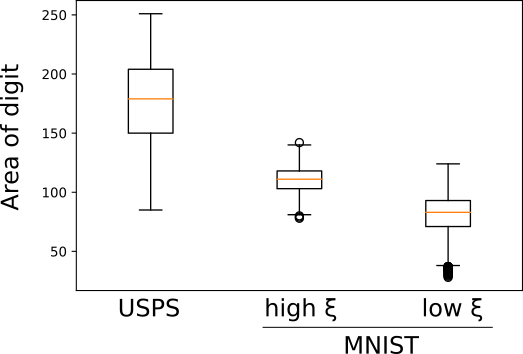}}
\vspace{-0.2cm}
\caption{Unbalanced Optimal Transport from MNIST to USPS digits.}
\vspace{-0.2cm}
\label{fig:num_data}
\end{figure}

{\bf CelebA-Young-to-CelebA-Aged.} We applied Algorithm \ref{alg:TAG} on the CelebA dataset to perform unbalanced OT from the population of young faces to the population of aged faces. This synthetic problem imitates a real application of interest, which is modeling the transformation of a population based on samples taken from two timepoints. Since the Euclidean distance between two faces is a poor measure of semantic similarity, we first train a variational autoencoder (VAE) \citep{kingma2013auto} on the CelebA dataset and encode all samples into the latent space. We then apply Algorithm \ref{alg:TAG} to perform unbalanced OT from the encoded young to the encoded aged faces, taking the transport cost to be the Euclidean distance in the latent space. 
A summary of the unbalanced transport is visualized in Figure \ref{fig:celebA_data}a. Each arrow originates from a real face from the young population and points towards the predicted appearance of this face in the aged population. Generally, the transported faces retain the most salient features of the original faces  (Figure \ref{fig:celebA_data}b), although there are exceptions (e.g. gender swaps) which reflects that some features are not prominent components of the VAE encodings. Interestingly, the young faces with higher scaling factors were significantly enriched for males compared to young faces with lower scaling factors; 9.6\% (9,913/103,287) of young female faces had a high scaling factor as compared to 18.5\% (8,029/53,447) for young male faces  (Figure \ref{fig:celebA_data}c, top, $p = 0$). In other words, our model predicts growth in the prominence of male faces compared to female faces as the CelebA population evolves from young to aged. After observing this phenomenon, we confirmed based on checking the ground truth labels that there was indeed a strong gender imbalance between the young and aged populations: while the young population is predominantly female, the aged population is predominantly male (Figure \ref{fig:celebA_data}c, bottom). 

\begin{figure}[b]
\vspace{-0.2cm}
\subfigure[]{\includegraphics[scale=.30]{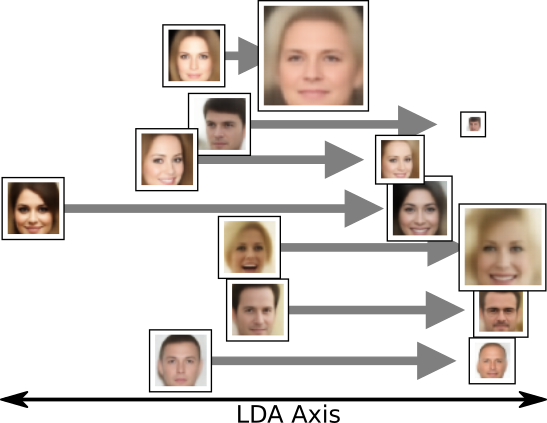}} \quad
\subfigure[]{\includegraphics[scale=.20]{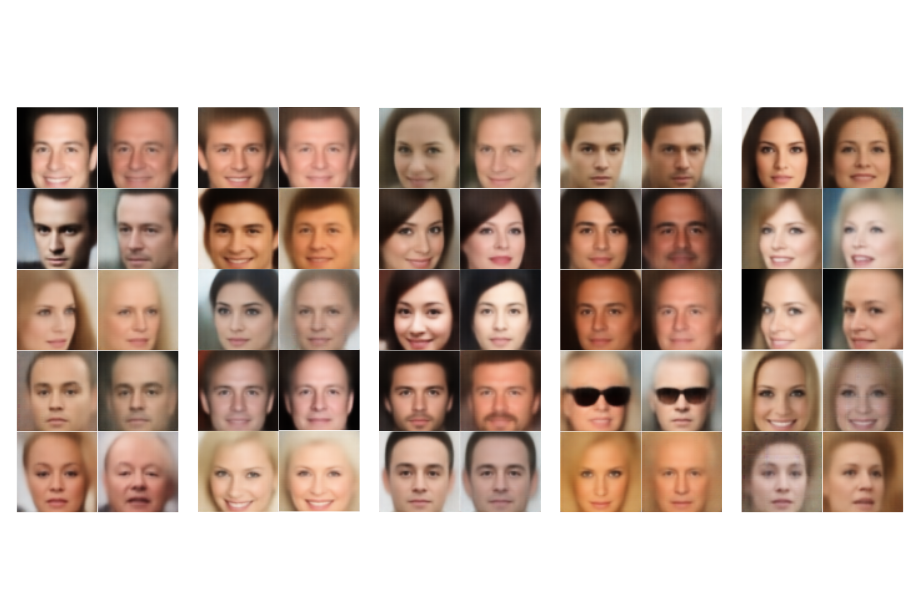}} \quad
\subfigure[]{\includegraphics[scale=.15]{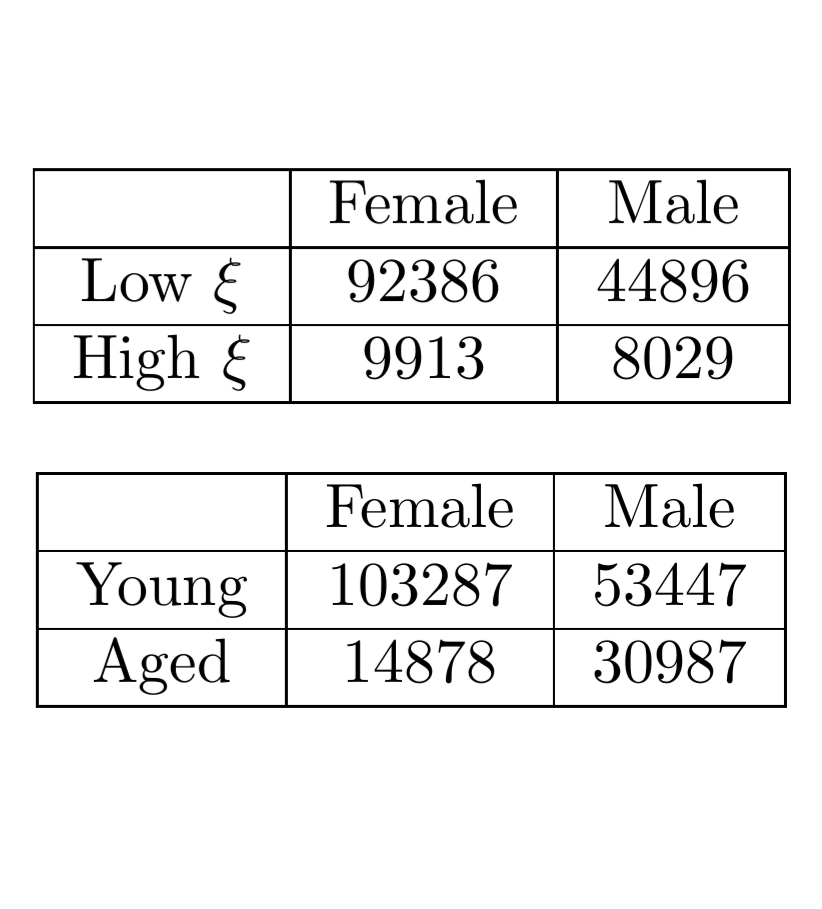}}
\vspace{-0.2cm}
\caption{Unbalanced Optimal Transport from Young to Aged CelebA Faces.}
\label{fig:celebA_data}
\vspace{-0.2cm}
\end{figure}

{\bf Zebrafish embroygenesis.} A problem of great interest in biology is lineage tracing of cells between different developmental stages or during disease progression. This is a natural application of transport in which the source and target distributions are unbalanced: some cells in the earlier stage are more poised to develop into cells seen in the later stage. To showcase the relevance of learning the scaling factor, we apply Algorithm \ref{alg:TAG} to recent single-cell gene expression data from two stages of zebrafish embryogenesis~\citep{farrelleaar3131}. 
The source population is from a late stage of blastulation and the target population from an early stage of gastrulation (Figure \ref{fig:zebrafish}a). The results of the transport are plotted in Figure \ref{fig:zebrafish}b-c after dimensionality reduction by PCA and T-SNE \citep{maaten2008visualizing}. To assess the scaling factor, we extracted the cells from the blastula stage with higher scaling factors (i.e. over 90th percentile) and compared them to the remainder of the cells using differential gene expression analysis, producing a ranked list of upregulated genes. Using the GOrilla tool \citep{eden2009gorilla}, we found that the cells with higher scaling factors were significantly enriched for genes associated with differentiation and development of the mesoderm (Figure \ref{fig:zebrafish}d). This experiment shows that analysis of the scaling factor can be applied towards interesting and meaningful biological discovery.

\begin{figure}[h]
	\centering
	\includegraphics[scale=0.4]{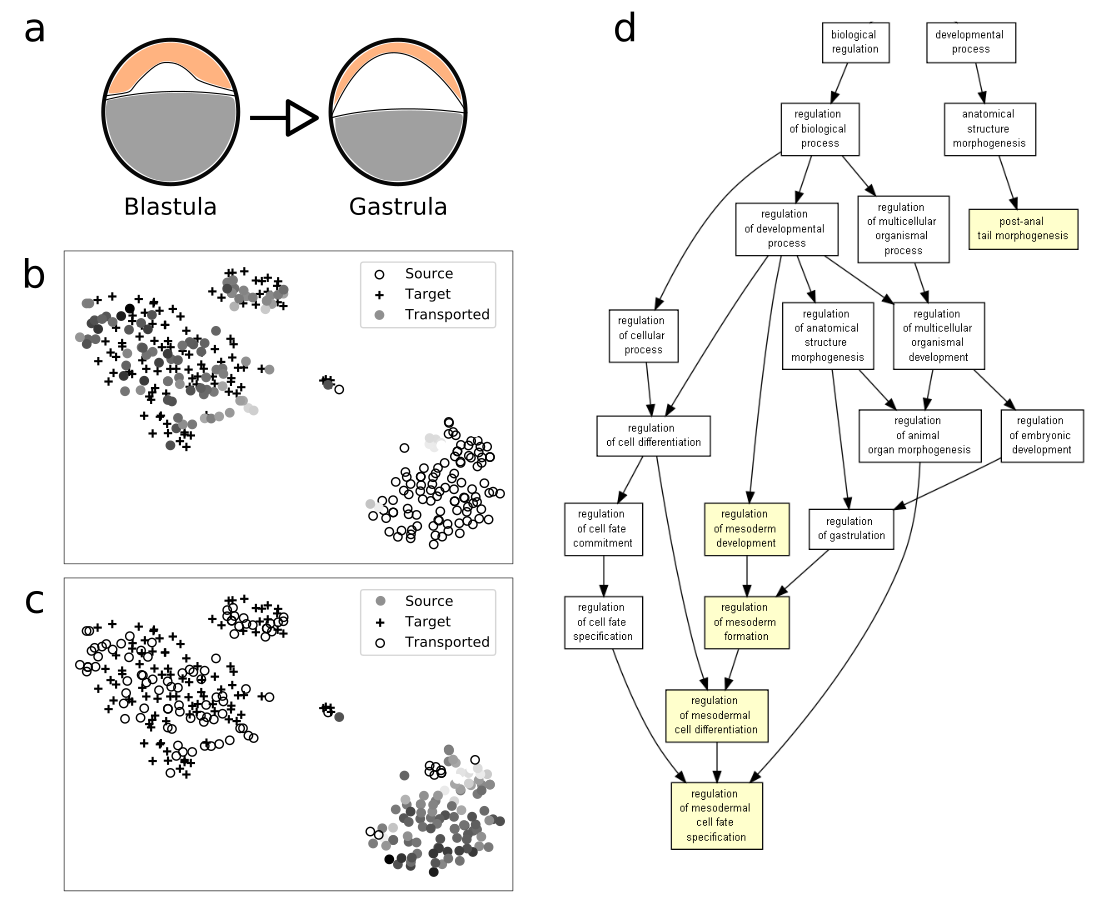}
	\caption{{\bf Unbalanced OT on Zebrafish Single-Cell Gene Expression Data.} (a) Illustration of the blastula and gastrula stages of zebrafish embryogenesis. (b) T-SNE plot \citep{maaten2008visualizing} of the unbalanced OT results for a subset of datapoints. The color of the transported points indicates the relative magnitude of the scaling factor (black = high, white = low). (c) Same plot as (b), where we have colored the source points instead of the transported points. (d) GOrilla output of significantly enriched processes \citep{eden2009gorilla} based on ranked list of enriched genes in cells with high scaling factors (black points from (c)) from a differential gene expression analysis. Processes in the graph are organized from more general (upstream) to more specific (downstream). }
	\label{fig:zebrafish}
\end{figure}



\pagebreak
\subsection*{Acknowledgements}
Karren Yang was supported by an NSF Graduate Fellowship and ONR (N00014-17-1-2147). Caroline Uhler was partially supported by NSF (DMS-1651995), ONR (N00014-17-1-2147 and N00014-18-1-2765), IBM, and J-WAFS. 

\bibliographystyle{iclr2019_conference}
\bibliography{iclr2019_conference}

\appendix

\thispagestyle{empty}
\onecolumn
\maketitle
\appendix
\allowdisplaybreaks[1]

{\Large{APPENDIX}}

\section{Dual Stochastic Method}\label{sec:met1}

In this section, we present a stochastic method for unbalanced OT based on the regularized dual formulation of \citep{chizat2015unbalanced}, which can be considered a natural generalization of \cite{seguy2017large}. The dual formulation of (\ref{eq:transport}) is given by
\[
\sup_{u, v} - \int \psi^*_1 (-u) d\mu - \int \psi^*_2(-v) d\nu
\]
subject to $u \oplus v \leq c$, where the supremum is taken over functions $u: \X \rightarrow [-\psi'_{1\infty}, \infty]$ and $v: \Y \rightarrow [-\psi'_{2\infty}, \infty]$. This is a constrained optimization problem that is challenging to solve. A standard technique for making the dual problem unconstrained is to add a strongly convex regularization term to the primal objective \citep{blondel2017smooth}, such as an entropic regularization term \citep{cuturi2013sinkhorn}:
\[
R_e(\gamma) = \eps D_{\psi_{KL}}(\gamma | \mu \otimes \nu)
\]
where $\eps > 0$. Concretely, this term has a ``smoothing" effect on the transport plan, in the sense that it encourages plans with high entropy. By the Fenchel-Rockafellar theorem, the dual of the regularized problem is given by,
\begin{equation}\label{eq:reg-dual}
W_{ub}^{\epsilon}(\mu, \nu):= \sup_{u, v} - \int \psi^*_1 (-u) d\mu
- \int \psi^*_2(-v) d\nu - \epsilon \int  e^{(u + v - c)/\epsilon} d(\mu \otimes \nu),
\end{equation}

where the supremum is taken over functions $u: \X \rightarrow [-\psi'_{1\infty}, \infty]$ and $v: \Y \rightarrow [-\psi'_{2\infty}, \infty]$, and the relationship between the primal optimizer $\gamma^*$ and dual optimizer $(u^*, v^*)$ is given by
\begin{equation} \label{eq:primal-dual}
d\gamma^* = e^{(u^* + v^* - c)/\epsilon} d(\mu \otimes \nu).
\end{equation}

Next, we rewrite (\ref{eq:reg-dual}) in terms of expectations. We assume that one has access to samples from $\mu, \nu$, and in the setting where $\mu, \nu$ are not normalized, then samples to the normalized measures $\tilde \mu, \tilde \nu$ as well as the normalization constants. Based on these assumptions, we have
\begin{equation} \label{eq:met1-exp}
W_{ub}^{\epsilon}(\mu, \nu) = \sup_{u, v} - c_\mu \E_{x \sim \tilde \mu} \psi^*_1 (-u(x))
- c_\nu \E_{y \sim \tilde \nu} \psi^*_2(-v(y)) - \epsilon c_\mu c_\nu \E_{x \sim \tilde \mu, y \sim \tilde \nu}  e^{(u(x) + v(y) - c(x,y))/\epsilon}.
\end{equation}

If $\psi_1^*, \psi_2^*$ are differentiable, we can parameterize $u$, $v$ with neural networks $u_\theta$, $v_\phi$ and optimize $\theta, \phi$ using stochastic gradient descent. This is described in Algorithm \ref{alg:1}. Note that this algorithm is a generalization of the algorithm of \citep{seguy2017large} from classical OT to unbalanced OT. Indeed, taking $\psi_1, \psi_2$ to be equality constraints, (\ref{eq:reg-dual}) becomes

\[
\sup_{u, v} \int u ~d\mu + \int v ~d\nu - \epsilon \int  e^{(u + v - c)/\epsilon} d(\mu \otimes \nu),
\]

which is the dual of the entropy-regularized classical OT problem. 

\begin{algorithm}[H]
\caption{SGD for Unbalanced OT}
\begin{algorithmic}
   \STATE {\bfseries Input:} Initial parameters $\theta$, $\phi$; step size $\eta$; regularization parameter $\epsilon$; constants $c_\mu, c_\nu$ and normalized measures $\tilde \mu$, $\tilde \nu$
   \STATE {\bfseries Output:} Updated parameters $\theta$, $\phi$
   \WHILE{($\theta$, $\phi$) not converged}
   
   \STATE Sample $(x_1,y_1), \cdots, (x_n, y_n)$ from $\tilde \mu \otimes \tilde \nu$
   
   \begin{align}
   \ell(\theta, \phi):= \frac{1}{n} \sum_{i=1}^n [ c_\mu \psi^*_1 (-u(x_i))
+ c_\nu \psi^*_2(-v(y_i)) + \epsilon c_\mu c_\nu  e^{(u(x_i) + v(y_i) - c(x_i,y_i))/\epsilon} ] \nonumber
   \end{align}   
   
   \STATE Update $\theta, \phi$ by gradient descent on $\ell(\theta, \phi)$
   \ENDWHILE
\end{algorithmic}
\label{alg:1}
\end{algorithm}

The dual solution ($u^*, v^*$) learned from Algorithm \ref{alg:1} can be used to reconstruct the primal solution $\gamma^*$ based on the relation in (\ref{eq:primal-dual}). Concretely, $\gamma^*$ is a transport map that indicates the amount of mass transported between every pair of points in $\X$ and $\Y$. Note that the marginals of $\gamma^*$ with respect to $\X$ and $\Y$ are not necessarily $\mu$ and $\nu$, which is where mass variation is implicitly built into the problem. Given $\gamma^*$, it is possible to also learn an ``averaged" deterministic mapping from $\X$ to $\Y$. A standard approach is to take the \emph{barycentric projection} $T: \X \rightarrow \Y$, defined as,
\[
T(x) = \min_{z \in \Y} \E_{y \sim \gamma^*(\cdot | x)} d(z, y),
\]
with respect to some distance $d: \Y \times \Y \rightarrow \mathbb{R}^+$. \cite{seguy2017large} proposed a stochastic algorithm for learning such a map from the dual solution, which we reproduce in Algorithm \ref{alg:BP}.

\begin{algorithm}[H]
\caption{Learning Barycentric Projection}
\begin{algorithmic}
   \STATE {\bfseries Input:} Learned functions $u, v$; initial $T_\theta$; distance function $d$
   \STATE {\bfseries Output:} Updated $T_\theta$
   \WHILE{$T_\theta$ not converged}
   \STATE Sample $(x_1,y_1), \cdots, (x_n, y_n)$ from $\tilde \mu \otimes \tilde \nu$
   
   \[
   \ell(\theta):= \frac{1}{n} \sum_{i=1}^n d(x_i, T_\theta(y_i)) e^{(u(x_i) + v(y_i) - c(x_i,y_i))/\epsilon} 
   \] 
   
   \STATE Update $T_\theta$ by gradient descent on $\ell(\theta)$
   \ENDWHILE
\end{algorithmic}
\label{alg:BP}
\end{algorithm}

\vspace{0.5cm}


\section{Supplement to Section \ref{sec:met2}}

\subsection{Relaxation to optimal-entropy transport}
The \emph{objectives} in (\ref{eq:ubOT}) and (\ref{eq:transport}) are equivalent if one reformulates (\ref{eq:ubOT}) in terms of $\gamma$ instead of $(T, \xi)$, where $\gamma \in \M_+(\X \times \Y)$ is a joint measure given by
\begin{equation}\label{eq:gamma}
\gamma(C) := \int_\X \int_\Z \mathbbm{1}_C(x,T(x,z)) d\lambda(z) \xi(x) d\mu(x), \quad \forall C \in \B(\X) \times \B(\Y).
\end{equation}
Furthermore, the formulations are equivalent if one restricts the search space of (\ref{eq:transport}) to contain only those joint measures that can be specified by some $(T, \xi)$. This relation between the formulations is formalized by Lemma \ref{lem:opt-entropy}.

\begin{proof}[Proof of Lemma \ref{lem:opt-entropy}]
First we show $L_\psi(\mu, \nu) \geq \tilde W_{c_1, c_2, \psi}(\mu, \nu)$. If $L_\psi(\mu, \nu) = \infty$, this is trivial, so assume $L_\psi(\mu, \nu) < \infty$. Let $(T, \xi)$ be any solution and define $\gamma$ by (\ref{eq:gamma}). Note by this definition that
\[
\pi_{\#}^\X \gamma (A) = \int_\X \mathbbm{1}_A(x) \xi(x) d\mu(x), \quad \forall A \in \B(\X),
\]
i.e. $\xi$ is the Radon-Nikodym derivative of $\pi_{\#}^\X \gamma$ with respect to $\mu$. Also 
\[
\pi_{\#}^\Y \gamma (B) = \int_\X \int_\Z \mathbbm{1}_B(T(x,z)) d\lambda(z) \xi(x) d\mu(x) = T_\#(\xi\mu \times \lambda)(B), \quad \forall B \in \B(\Y).
\]
It follows that
\begin{align*}
&\int_\X \left(\int_\Z c_1(x, T(x,z)) d\lambda(z)\right) \xi(x) d\mu(x) + \int_\X c_2(\xi(x)) d\mu(x) + D_{\psi}(T_{\#}(\xi\mu \times \lambda) | \nu)\\
&\qquad= \int_{\X \times \Y} c_1(x,y) d\gamma(x,y) + \int_\X c_2(\xi(x)) d\mu(x) + D_{\psi}(T_{\#}(\xi\mu \times \lambda) | \nu)\\
& \qquad\quad\;\; (\text{by definition of $\gamma$, linearity and monotone convergence})\\
&\qquad= \int_{\X \times \Y} c_1(x,y) d\gamma(x,y) + \int_\X c_2(\frac{d\pi_{\#}^\X \gamma}{d\mu}) d\mu(x) + D_{\psi}(T_{\#}(\xi\mu \times \lambda) | \nu)\\
& \qquad\quad\;\;(\text{since $\xi$ is the Radon-Nikodym derivative})\\
&\qquad= \int_{\X \times \Y} c_1(x,y) d\gamma(x,y) + D_{c_2}(\pi_{\#}^\X \gamma | \mu) + D_{\psi}(T_{\#}(\xi\mu \times \lambda) | \nu)\\
&\qquad= \int_{\X \times \Y} c_1(x,y) d\gamma(x,y) + D_{c_2}(\pi_{\#}^\X \gamma | \mu) + D_{\psi}(\pi_{\#}^\Y \gamma | \nu)\\
&\qquad\geq \tilde W_{c, \psi_1, \psi_2}(\mu,\nu).
\end{align*}
Since this inequality holds for any $(T, \xi)$, taking the infimum over the left-hand side yields $L_\psi(\mu, \nu) \geq \tilde W_{c, \psi_1, \psi_2}(\mu,\nu)$.\\

To show $L_\psi(\mu, \nu) \leq \tilde W_{c, \psi_1, \psi_2}(\mu,\nu)$, assume $\tilde W_{c, \psi_1, \psi_2}(\mu,\nu) < \infty$ and let $\gamma$ be any solution. By the disintegration theorem, there exists a family of probability measures $\{\gamma_{y|x}\}_{x \in \X}$ in $\PP(\Y)$ such that
\[
\gamma(C) = \int_\X \int_\Y \mathbbm{1}_C(x,y) d\gamma_{y|x}(y) d\pi^\X_\#\gamma, \quad \forall C \in \B(\X) \times \B(\Y).
\]
Since $(\Z, \B(\Z), \lambda)$ is atomless, it follows from Proposition 9.1.2 and Theorem 13.1.1 in \citep{dudley2018real} that there exists a family of measurable functions $\{T_x:\Z \rightarrow \Y\}_{x \in \X}$ such that $\gamma_{y|x}$ is the pushforward measure of $\lambda$ under $T_x$ for all $x \in \X$. %
Denoting $T(x,z): (x,z) \mapsto T_x(z)$, then by a change of variables,
\[
\gamma(C) = \int_\X \int_\Z \mathbbm{1}_C(x,T(x,z)) d\lambda(z) d\pi^\X_\#\gamma, \quad \forall C \in \B(\X) \times \B(\Y).
\]

By hypothesis, $\pi^\X_\#\gamma$ is restricted to the support of $\mu$, i.e. $\pi^\X_\#\gamma \lll \mu$.  Let $\xi$ be the Radon-Nikodym derivative $\frac{d\pi^\X_\#\gamma}{d\mu}$. It follows from the Radon-Nikodym theorem that $(T, \xi)$ satisfy
\[
\gamma(C) = \int_\X \int_\Z \mathbbm{1}_C(x,T(x,z)) d\lambda(z) \xi(x) d\mu(x), \quad \forall C \in \B(\X) \times \B(\Y),
\]

which is the same relation as in (\ref{eq:gamma}). Same as before,
\[
\pi_{\#}^\X \gamma (A) = \int_\X \mathbbm{1}_A(x) \xi(x) d\mu(x), \quad \forall A \in \B(\X),
\]
and 
\[
\pi_{\#}^\Y \gamma (B) = \int_\X \int_\Z \mathbbm{1}_B(T(x,z)) d\lambda(z) \xi(x) d\mu(x) = T_\#(\xi\mu \times \lambda)(B), \quad \forall B \in \B(\Y).
\]
It then follows that
\begin{align*}
& \int_{\X \times \Y} c_1 d\gamma + D_{c_2}(\pi_{\#}^\X \gamma | \mu) + D_{\psi}(\pi_{\#}^\Y \gamma  | \nu)\\
&= \int_\X \left(\int_\Y c_1(x, y) d\gamma_{y|x}(y)\right) d\pi^\X_\#\gamma + D_{c_2}(\pi_{\#}^\X \gamma | \mu) + D_{\psi}(\pi_{\#}^\Y \gamma  | \nu)\\
&\text{(by Fubini's Theorem for Markov kernels)}\\
&=\int_\X \left(\int_\Z c(x, T(x,z)) d\lambda(z)\right) d\pi^\X_\#\gamma + D_{c_2}(\pi_{\#}^\X \gamma | \mu) + D_{\psi}(\pi_{\#}^\Y \gamma  | \nu)\\
&\text{(by change of variables)}\\
&=\int_\X \left(\int_\Z c(x, T(x,z)) d\lambda(z)\right) d\pi^\X_\#\gamma + \int c_2(\frac{d\pi_{\#}^\X \gamma }{d\mu}) d\mu(x) + D_{\psi}(\pi_{\#}^\Y \gamma  | \nu)\\
&=\int_\X \left(\int_\Z c(x, T(x,z)) d\lambda(z)\right) \xi(x) d \mu(x) + \int c_2(\xi(x)) d\mu(x) + D_{\psi}(\pi_{\#}^\Y \gamma  | \nu)\\
&\text{(by definition of $\xi$)}\\
&=\int_\X \left(\int_\Z c(x, T(x,z)) d\lambda(z)\right) d \mu(x) + \int c_2(\xi(x)) d\mu(x) + D_{\psi}(T_{\#}(\xi\mu \times \lambda)  | \nu)\\
&\geq L_\psi( \mu, \nu).
\end{align*}
Since this inequality holds for any $\gamma$, this implies that $\tilde W_{c_1, c_2, \psi}(\mu, \nu) \geq L_\psi(\mu, \nu)$, which completes the proof.
\end{proof}

Due to the near equivalence of the formulations, several theoretical results for (\ref{eq:ubOT}) follow from the analysis of optimal entropy-transport by \citet{liero2018optimal}, such as the following existence and uniqueness result: 

\begin{proposition} \label{prop:wellposed} Suppose $L_\psi(\mu,\nu) < \infty$, $c_2$ is convex and lower semi-continuous, and $(\Z, \B(\Z), \lambda)$ is atomless. If (i) $c_1$ has compact sublevel sets in $\X \times \Y$ and $c_{2\infty}' + \psi_\infty' > 0$, or (ii) $c_{2\infty}' = \psi_\infty' = \infty$ then a minimizer of $L_\psi(\mu,\nu)$ exists. If $c_2, \psi$ are strictly convex, $\psi_\infty' = \infty$ and $c_1$ satisfies Corollary 3.6 \citep{liero2018optimal}, then the joint measure $\gamma$ specified by any minimizer of $L_\psi(\mu,\nu)$ is unique.
\end{proposition}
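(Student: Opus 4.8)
The plan is to reduce the whole statement to the restricted optimal entropy-transport functional $\tilde W_{c_1,c_2,\psi}$ and then invoke (a support-restricted form of) the well-posedness theory of \citet{liero2018optimal}. Since $c_2$ is convex and lower semi-continuous it is an entropy function, and $(\Z,\B(\Z),\lambda)$ is atomless, so Lemma \ref{lem:opt-entropy} gives $L_\psi(\mu,\nu) = \tilde W_{c_1,c_2,\psi}(\mu,\nu)$. Moreover the proof of that lemma exhibits a value-preserving correspondence between pairs $(T,\xi)$ and joint measures $\gamma \in \G$, so the joint measures specified by minimizers of $L_\psi$ are exactly the minimizers of $\gamma \mapsto \int c_1\,d\gamma + D_{c_2}(\pi^\X_\#\gamma|\mu) + D_\psi(\pi^\Y_\#\gamma|\nu)$ over the convex, weakly closed set $\G$ of measures supported on $\mathrm{supp}(\mu)\times\Y$. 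Hence it suffices to prove existence (resp. uniqueness) for this restricted variational problem.

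For existence I would run the direct method on the restricted problem, mirroring \citet{liero2018optimal}. Since $L_\psi(\mu,\nu)<\infty$, the infimum defining $\tilde W_{c_1,c_2,\psi}$ is finite, so there is a minimizing sequence $\gamma_n \in \G$ with all three (non-negative) terms bounded. Under hypothesis (ii), $c_{2\infty}'=\psi_\infty'=\infty$ already forces $\pi^\X_\#\gamma_n \lll \mu$ and $\pi^\Y_\#\gamma_n \lll \nu$ with uniformly integrable densities and bounded masses, so tightness of both marginals follows from tightness of the fixed finite measures $\mu,\nu$; under hypothesis (i) one uses $c_{2\infty}' + \psi_\infty' > 0$ to bound the total mass $\gamma_n(\X\times\Y)$ and then the compact sublevel sets of $c_1$ together with the bound on $\int c_1\,d\gamma_n$ to prevent mass (in particular the parts of the marginals singular w.r.t.\ $\mu,\nu$) from escaping to infinity. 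Prokhorov then gives a subsequence $\gamma_n \rightharpoonup \gamma$, with $\gamma \in \G$ because $\mathrm{supp}(\mu)\times\Y$ is closed; joint weak lower semi-continuity of the two $\psi$-divergence terms (a standard property, as in \citet{liero2018optimal}) and of $\gamma\mapsto\int c_1\,d\gamma$ (since $c_1$ is lower semi-continuous and non-negative) give $F(\gamma)\le\liminf F(\gamma_n)=\tilde W_{c_1,c_2,\psi}(\mu,\nu)$, so $\gamma$ is a minimizer; the corresponding $(T,\xi)$ from Lemma \ref{lem:opt-entropy} minimizes $L_\psi$.

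For uniqueness, add the hypotheses that $c_2,\psi$ are strictly convex, $\psi_\infty'=\infty$, and $c_1$ satisfies the structural condition of Corollary 3.6 in \citet{liero2018optimal}. The argument splits in two. First, the optimal marginals are unique: $F$ is convex and depends on $\gamma$ only through the affine quantities $\int c_1\,d\gamma$, $\pi^\X_\#\gamma$, $\pi^\Y_\#\gamma$, so if two minimizers $\gamma_0,\gamma_1$ had distinct $\X$- or $\Y$-marginals, then along the segment $\gamma_t=(1-t)\gamma_0+t\gamma_1$ the strict convexity of $s\mapsto c_2(s)$ and $s\mapsto\psi(s)$ applied to the marginal densities (with $\psi_\infty'=\infty$ killing the singular part of $\pi^\Y_\#\gamma$, and the restriction to $\G$ localizing $\pi^\X_\#\gamma$) would make $D_{c_2}(\cdot|\mu)+D_\psi(\cdot|\nu)$ strictly convex, contradicting minimality of $\gamma_{1/2}$. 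Second, with the optimal marginals $(\alpha,\beta)$ fixed, the minimizing $\gamma$ solves the classical Kantorovich problem $\inf\{\int c_1\,d\gamma:\pi^\X_\#\gamma=\alpha,\ \pi^\Y_\#\gamma=\beta\}$, whose solution is unique under the Corollary 3.6 hypothesis on $c_1$. Hence the optimal $\gamma$ is unique, and by Lemma \ref{lem:opt-entropy} the joint measure specified by any minimizer of $L_\psi$ is unique (the pair $(T,\xi)$ itself need not be).

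The main obstacle I anticipate is the coercivity/tightness step in case (i): when $c_{2\infty}'$ or $\psi_\infty'$ is finite, the parts of the marginals singular with respect to $\mu$ and $\nu$ are only weakly penalized, so excluding loss of mass to infinity requires carefully combining the compact-sublevel-set hypothesis on $c_1$ with the bounds on all three terms, essentially re-deriving the relevant estimates of \citet{liero2018optimal} in the presence of the support constraint $\gamma\in\G$. A secondary technical point is checking that restricting to the closed convex set $\G$ interferes with neither the lower semi-continuity used for existence nor the strict-convexity and transport-uniqueness used for uniqueness; this should be immediate since $\G$ is preserved under both weak limits and convex combinations.
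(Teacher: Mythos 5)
Your proposal is correct and follows the same overall route as the paper: reduce Proposition \ref{prop:wellposed} to the restricted entropy-transport functional over $\G$ via Lemma \ref{lem:opt-entropy}, then appeal to the well-posedness theory of \citet{liero2018optimal}. The one genuine difference is how the support constraint $\gamma \in \G$ is handled. You propose to re-run the direct method (mass bounds, tightness, Prokhorov, lower semicontinuity) on the constrained problem, and you correctly flag the case-(i) coercivity estimates as the main burden. The paper dissolves that obstacle with a single observation: $\tilde W_{c_1,c_2,\psi}$ over $\G$ is precisely the \emph{unrestricted} optimal entropy-transport problem posed on the base space $\mathrm{supp}(\mu)$ (a closed subset of $\X$) in place of $\X$, and hypotheses (i) and (ii) survive this replacement (sublevel sets of $c_1$ restricted to a closed set remain compact, and the recession constants $c_{2\infty}'$, $\psi_\infty'$ are unchanged), so Theorem 3.3 of \citet{liero2018optimal} gives existence verbatim with nothing to re-derive. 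Your two-step uniqueness argument --- strict convexity of $c_2, \psi$ with $\psi_\infty'=\infty$ forcing unique optimal marginals, then uniqueness of the Kantorovich plan with those marginals under the Corollary 3.6 hypothesis on $c_1$ --- is exactly the content of Lemma 3.5 and the proof of Corollary 3.6 in \citet{liero2018optimal}, which the paper cites rather than reproves; both versions then pass back through the construction in the proof of Lemma \ref{lem:opt-entropy} to conclude uniqueness of the induced joint measure only (not of the pair $(T,\xi)$), as you note. So your argument is sound, just longer than necessary at the existence step.
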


\begin{proof}[Proof of Proposition \ref{prop:wellposed}]
Note that $\tilde W_{c_1, c_2, \psi}(\mu, \nu)$ is equivalent to $W_{c_1, c_2, \psi}(\mu, \nu)$ when $\X$ is restricted to the support of $\mu$. If $c_1, c_2, \psi$ satisfy (i) or (ii), they also satisfy (i) and (ii) when $\X$ is restricted to the support of $\mu$. By Theorem 3.3 of \citep{liero2018optimal}, $\tilde W_{c_1, c_2, \psi}(\mu, \nu)$ has a minimizer. It follows from the construction of the proof of Lemma \ref{lem:opt-entropy} that a minimizer of $L_\psi(\mu, \nu)$ also exists. For uniqueness, if $\psi_\infty'= \infty$, then it follows from Lemma 3.5 of \citep{liero2018optimal} and the fact that minimizers are restricted to $\G$ that the marginals $\pi^\X_\#\gamma$, $\pi^\Y_\#\gamma$ are uniquely determined for any solution $\gamma$ of $\tilde W_{c_1, c_2, \psi}(\mu, \nu)$. The uniqueness of $\gamma$ then follows from the proof of Corollary 3.6 in \citep{liero2018optimal}. It follows from the construction of the proof of Lemma \ref{lem:opt-entropy} that the product measure generated by the minimizers of $L_\psi(\mu, \nu)$ is unique, which completes the proof.
\end{proof}

For certain cost functions and divergences, it can be shown that $L_{\psi}$ defines a proper metric between positive measures $\mu$ and $\nu$, i.e. taking $c_2, \psi$ to be entropy functions corresponding to the KL-divergence and $c_1 = −\log \cos^2_+(d(x,y))$, then $L_\psi(\mu, \nu)$ corresponds to the Hellinger-Kantorovich \citep{liero2018optimal} or the Wasserstein-Fisher-Rao \citep{chizat2018interpolating} metric between positive measures $\mu$ and $\nu$. 

Based on Lemma \ref{lem:opt-entropy}, the theoretical analysis of \cite{liero2018optimal}, and standard results on constrained optimization, it can be shown that for an appropriate and sufficiently large choice of divergence penalty, solutions of the relaxed problem (\ref{eq:ubOT}) converge to solutions of the original problem (\ref{eq:ubOT_exact}) (Theorem \ref{prop:relaxed}). 

\begin{proof}[Proof of Theorem \ref{prop:relaxed}] Since $\zeta^k \psi(s)$ converges pointwise to the equality constraint $\iota_{=}(s)$, which is $0$ for $s=1$ and $\infty$ otherwise, by Lemma 3.9 in \citep{liero2018optimal}, we have that $\liminf_{k \rightarrow \infty} \tilde W_{c_1, c_2, \zeta^k \psi}(\mu, \nu) \geq \tilde W_{c_1, c_2, \iota_{=}}(\mu, \nu)$. Additionally, $\tilde W_{c_1, c_2, \zeta^k \psi}(\mu, \nu) \leq \tilde W_{c_1, c_2, \iota_{=}}(\mu, \nu)$ for any value of $k$ since for any minimizer~$\gamma$ of $\tilde W_{c_1, c_2, \iota_{=}}(\mu, \nu)$, it holds that $\pi^\Y_\#\gamma = \nu$. Hence
\begin{align*}
\tilde W_{c_1, c_2, \iota_{=}}(\mu, \nu) = \int c_1 d\gamma + D_{c_2}(\pi^\X_\#\gamma | \mu) + D_{\zeta^k \psi_2}(\pi^\Y_\#\gamma| \nu) \geq \tilde W_{c_1, c_2, \zeta^k \psi}(\mu, \nu),
\end{align*}
for all $k$. Therefore, $\lim_{k \rightarrow \infty} \tilde W_{c_1, c_2, \zeta^k \psi}(\mu, \nu) = \tilde W_{c_1, c_2, \iota_{=}}(\mu, \nu)$, which then by Lemma \ref{lem:opt-entropy} implies the first part of the proposition.

For the second part, by the hypothesis we have that $\tilde W_{c_1, c_2, \iota_{=}}(\mu, \nu) = C < \infty$ and as a consequence $\tilde W_{c_1, c_2, \zeta^k \psi}(\mu, \nu) \leq C$ for all $k$. Hence, by Proposition 2.10 in \citep{liero2018optimal}, the sequence of minimizers $\gamma^k$ is bounded. If the assumptions of Proposition \ref{prop:wellposed} are satisfied, then the sequence $\gamma^k$ is equally tight. For assumption (ii) this follows by Proposition 2.10 in \citep{liero2018optimal} and for assumption (i) this follows by the Markov inequality: for any $\lambda>0$, 
\[
\gamma^k(\{ (x,y) \in \X \times \Y | c_1(x,y) > \lambda \}) \leq \frac{1}{\lambda} \int c_1 d\gamma^k \leq \frac{C}{\lambda}.
\]

Since $\gamma^k$ are bounded and equally tight, by an extension of Prokhorov's theorem (Theorem 2.2 of \citep{liero2018optimal}), there exists a subsequence of $\gamma^k$ that is weakly convergent to some $\bar{\gamma}$. Then by lower semicontinuity, we obtain that
\[
\int c_1 d\bar{\gamma} + D_{c_2}(\pi^\X_\#\bar{\gamma} | \mu) + \limsup_{k \rightarrow \infty} D_{\zeta^k \psi_2}(\pi^\Y_\#\gamma^k | \nu) \leq \tilde W_{c_1, c_2, \iota_{=}}(\mu, \nu) = C
\]
Since $D_{\zeta^k \psi}(\pi^\Y_\#\gamma^k | \nu) = \zeta^k D_{\psi}(\pi^\Y_\#\gamma^k | \nu) \geq 0$ and $\zeta^k \rightarrow \infty$, for the left side to be finite, $D_{\psi}(\pi^\Y_\#\gamma^k | \nu)$ must converge to $0$, so $D_{\psi}(\pi^\Y_\#\bar{\gamma} | \nu) = 0$ by lower semicontinuity. Therefore, $\bar{\gamma}$ is a minimizer of $\tilde W_{c_1, c_2, \iota_{=}}(\mu, \nu)$. By construction of the proof of Lemma \ref{lem:opt-entropy}, $\gamma^k$ is equivalent to the product measure induced by minimizers of $L_{\zeta^k \psi}(\mu, \nu)$, which implies the second part of the proposition. 
\end{proof}

\subsection{Convex conjugate form of divergences}

In this section, we present the convex conjugate form of $\psi$-divergence used to rewrite the main objective as a min-max problem.

\begin{lemma} \label{lem:vc2}
 For non-negative finite measures $P,Q$ over $\T \subset \mathbb{R}^d$, it holds that
\begin{equation}
D_{\psi}(P|Q) \geq \sup_{f \in \F} \int f dP - \psi^*(f) dQ
\end{equation}
where $\F$ is a subset of measurable functions $\{f: \T \rightarrow (-\infty, \psi'_{\infty}] \}$. Equality holds if and only if $\exists f \in \F$ such that (i) the restriction of $f$ to the support of $Q$ belongs to the subdifferential of $\psi(\frac{dP}{dQ})$, i.e. the Radon-Nikodym derivative of $P$ with respect to $Q$ and (ii) $f=\psi'_{\infty}$ over the support of $P^\perp$.
\end{lemma}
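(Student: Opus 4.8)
The plan is to derive the inequality from the pointwise Fenchel--Young inequality, read through the Lebesgue decomposition $P = \rho\, Q + P^{\perp}$ with $\rho := \frac{dP}{dQ}$, and then to locate exactly where the two elementary estimates in that argument become equalities.

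\textbf{Step 1: the lower bound.} Fix any $f \in \F$; by definition $f$ is measurable and $f \le \psi'_{\infty}$ pointwise. Decompose $\int_{\T} f\, dP = \int_{\T} f\rho\, dQ + \int_{\T} f\, dP^{\perp}$. From $\psi^*(y) = \sup_{s}\big(sy - \psi(s)\big)$ we get the Fenchel--Young inequality $f(x)\rho(x) - \psi^*\!\big(f(x)\big) \le \psi\!\big(\rho(x)\big)$ for $Q$-a.e.\ $x$; integrating against $Q$ gives $\int f\rho\, dQ - \int \psi^*(f)\, dQ \le \int \psi(\rho)\, dQ$. Since $P^{\perp} \ge 0$ and $f \le \psi'_{\infty}$, we also have $\int f\, dP^{\perp} \le \psi'_{\infty}\, P^{\perp}(\T)$ — this is precisely where the codomain restriction built into $\F$ is used. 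Summing, $\int f\, dP - \int \psi^*(f)\, dQ \le \psi'_{\infty} P^{\perp}(\T) + \int \psi(\rho)\, dQ = D_{\psi}(P|Q)$, and taking the supremum over $f \in \F$ yields the claimed inequality. (This also explains the restriction on $\F$: for $y > \psi'_{\infty}$ one has $\psi^*(y) = +\infty$, so enlarging the codomain cannot help.)

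\textbf{Step 2: the equality case.} Tracing the proof, the bound in Step 1 is an equality for a given $f$ if and only if both intermediate inequalities are: equality in Fenchel--Young $Q$-a.e., which by the standard characterization means $f(x) \in \partial\psi\!\big(\rho(x)\big)$ for $Q$-a.e.\ $x$ — this is condition (i); and $\int f\, dP^{\perp} = \psi'_{\infty}P^{\perp}(\T)$, which since $f \le \psi'_{\infty}$ forces $f = \psi'_{\infty}$ $P^{\perp}$-a.e. — this is condition (ii). Hence, if some $f \in \F$ satisfies (i)--(ii) then $\int f\,dP - \int\psi^*(f)\,dQ = D_{\psi}(P|Q)$, and since the left side is always at most the supremum in the statement, which in turn is at most $D_{\psi}(P|Q)$, that supremum equals $D_{\psi}(P|Q)$ and is attained; conversely, any maximizer achieving the value $D_{\psi}(P|Q)$ must satisfy (i)--(ii) by the same trace. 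For the existence direction one exhibits such an $f$ concretely: take $f$ to be a measurable selection of the monotone set-valued map $x \mapsto \partial\psi(\rho(x))$ — e.g.\ the right derivative $\psi'_{+}(\rho(x))$, which is nondecreasing in $\rho(x)$ hence Borel — on the $Q$-part, and $f = \psi'_{\infty}$ on the support of $P^{\perp}$, provided this $f$ lies in $\F$; one then checks that $\psi^*(f) = f\rho - \psi(\rho)$ is $Q$-integrable whenever $D_{\psi}(P|Q) < \infty$.

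\textbf{Expected main obstacle.} The Fenchel--Young algebra is routine; the real care is in the corner cases and measurability. First, the subgradient selection must be shown to be measurable and finite-valued, and where $\rho$ meets the boundary of $\mathrm{dom}\,\psi$ (or where $\partial\psi$ is empty) one must argue such points are $Q$-null or otherwise do not obstruct — this is the step I expect to require an honest measurable-selection argument for the monotone multifunction $s \mapsto \partial\psi(s)$. Second, one must rule out an ill-defined $\infty - \infty$ in $\int f\, dP - \int \psi^*(f)\, dQ$ and confirm the conjugate term is integrable. Third, the case $\psi'_{\infty} = +\infty$ needs separate treatment: the constraint $f \le \psi'_{\infty}$ is then vacuous, $\psi'_{\infty}P^{\perp}(\T)$ equals $+\infty$ unless $P^{\perp} = 0$, and condition (ii) must be read accordingly.
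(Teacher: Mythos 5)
Your proposal is correct and follows essentially the same route as the paper: both arguments rest on pointwise convex conjugacy (you via the Fenchel--Young inequality for a fixed $f$, the paper via the biconjugate identity $\psi(s)=\sup_{\xi\le\psi'_\infty}\{\xi s-\psi^*(\xi)\}$), combined with the Lebesgue decomposition and the bound $f\le\psi'_\infty$ on the singular part, and the equality case is identified through the same first-order/subdifferential condition. Your added attention to measurable selection, integrability, and the $\psi'_\infty=\infty$ corner case goes slightly beyond the paper's informal treatment, which defers full rigor to \cite{liero2018optimal}.
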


We provide a simple proof of this result. A similar result under stronger assumptions was shown in \cite{nguyen2008estimating} and used by \cite{nowozin2016f} for generative modeling. A rigorous proof can be found in \cite{liero2018optimal}.

\begin{proof}[Proof of Lemma \ref{lem:vc2}]
	Note that
\begin{align*}
D_\psi (P | Q) &= \psi_\infty' P_\perp (\T) + \int_{\T} \psi \left( \frac{dP}{dQ} \right) dQ\\
&= \psi_\infty' P_\perp (\T) + \int_{\T} \sup_{\xi \in \mathbb{R}} \{ \xi  \frac{dP}{dQ} - \psi^*(\xi) \} dQ\\ 
&\text{(by defintion of convex conjugate)}\\
&= \psi_\infty' P_\perp (\T) + \int_{\T} \sup_{\xi \in (-\infty, \psi_\infty']} \{ \xi  \frac{dP}{dQ} - \psi^*(\xi) \} dQ\\ 
&\text{(by Lemma \ref{lem:psi}) below}\\
&= \int_{\T} \sup_{\xi \in (-\infty, \psi_\infty']} \{\xi dP_\perp + \xi  \frac{dP}{dQ}dQ - \psi^*(\xi)dQ \} \\ 
&= \sup_{f \in \F} \int_{\T}  f dP - \psi^*(f) dQ.\\
\end{align*}
By first-order optimality conditions, the optimal $f$ over the support of $Q$ is obtained when $\frac{dP}{dQ}$ belongs to the subdifferential of $\psi^*(f)$, or equivalently when $f$ belongs to the subdifferential of $\psi(\frac{dP}{dQ})$. It is straightforward to see that the optimal $f$ over the support of $P_\perp$ is equal to $\psi_\infty'$, which completes the proof.
\end{proof}

\begin{lemma}\label{lem:psi} If $\xi > \psi_\infty':= \lim_{s \rightarrow \infty} \frac{\psi(s)}{s}$, then $\psi^*(\xi) = \infty$.
\end{lemma}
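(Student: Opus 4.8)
The plan is to unwind the definition of the convex conjugate, $\psi^*(\xi) = \sup_{s \in \mathbb{R}} \{ \xi s - \psi(s) \}$, and exhibit values $s \to +\infty$ along which $\xi s - \psi(s)$ diverges. If $\psi_\infty' = +\infty$ there is nothing to prove, since no $\xi$ can satisfy $\xi > \psi_\infty'$; so I would assume $\psi_\infty' \in \mathbb{R}$ and fix $\xi > \psi_\infty'$.

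First I would observe that finiteness of $\psi_\infty'$ forces $\psi$ to be finite on some half-line $[a, \infty)$: since $\mathrm{dom}\,\psi$ is an interval by convexity, if it were bounded above then $\psi(s) = +\infty$ for all large $s$ and the limit $\lim_{s \to \infty} \psi(s)/s$ defining $\psi_\infty'$ would be $+\infty$, a contradiction. Hence the values $\xi s - \psi(s)$ that we care about are genuine real numbers for $s$ large.

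Next comes the one real estimate. Put $\xi' := \tfrac{1}{2}(\xi + \psi_\infty')$, so that $\psi_\infty' < \xi' < \xi$. By the definition of $\psi_\infty'$ as $\lim_{s\to\infty}\psi(s)/s$, there is some $S \ge a$ with $\psi(s) \le \xi' s$ for every $s \ge S$. Hence for all $s \ge S$,
\[
\xi s - \psi(s) \;\ge\; \xi s - \xi' s \;=\; (\xi - \xi')\, s,
\]
and since $\xi - \xi' > 0$ the right-hand side tends to $+\infty$ as $s \to \infty$. Therefore $\psi^*(\xi) \ge \sup_{s \ge S}\{\xi s - \psi(s)\} = +\infty$, which is the claim.

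The proof has essentially no obstacle: the content is exactly the definition of the recession slope $\psi_\infty'$ together with the fact that a strictly positive linear term overwhelms a function of at-most-linear growth. The only point requiring a moment's care is the bookkeeping in the second paragraph — that $\psi$ is finite for large arguments — so that the supremum defining $\psi^*$ genuinely sees the diverging values; beyond that one does not even need convexity or lower semicontinuity of $\psi$ other than to ensure the limit $\psi_\infty'$ is well defined.
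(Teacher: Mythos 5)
Your proof is correct and follows essentially the same route as the paper's: both bound $\psi^*(\xi)=\sup_s\{\xi s-\psi(s)\}$ from below by the behavior of $\xi s-\psi(s)=s(\xi-\psi(s)/s)$ as $s\to\infty$, which diverges since $\xi>\psi_\infty'$. Your version merely makes explicit the bookkeeping (the vacuous case $\psi_\infty'=\infty$, finiteness of $\psi$ on a half-line, and the intermediate slope $\xi'$) that the paper's three-line computation leaves implicit.
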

\begin{proof}
\begin{align*}
\psi^*(\xi) &= \sup_{s \in \mathbb{R}} s\xi - \psi(s)\\
&\geq \lim_{s \rightarrow \infty} s(\xi-\frac{\psi(s)}{s})\\
&= \infty ~\text{if} ~\xi > \psi_\infty'
\end{align*}
\end{proof}

\section{Practical Considerations for Numerical Experiments} \label{sec:prac}

{\bf Choice of cost functions.} Proposition \ref{prop:wellposed} gives sufficient conditions on $c_1, c_2$ for the problem to be well-posed. In practice, it is often convenient the cost of transport, $c_1$, to be some measurement of correspondence between $\X$ and $\Y$. For example, we can take $c_1(x,y)$ to be the Euclidean distance between $x$ and $y$ after mapping them to some common feature space. For the cost of mass adjustment, $c_2$, it is generally sensible to choose some convex function that vanishes at $1$ (i.e. no mass adjustment) and such that $\lim_{x \rightarrow 0} c_2(x) = \lim_{x \rightarrow \infty} c_2(x) = \infty$ to prevent $\xi$ from becoming too small or too large. Any of the entropy functions shown in Table \ref{tab:div} are reasonable choices. 

{\bf Choice of $\psi$.}
In \cite{nowozin2016f}, it was shown that any $\psi$-divergence could be used to train generative models, i.e. to match a generated distribution $P$ to a true data distribution $Q$. This is due to Jensen's inequality: for any convex lower semi-continous entropy function $\psi$, $D_{\psi}(P|Q)$ is uniquely minimized when $P=Q$, where $P,Q$ are probability measures. However, this does not generally hold when $P,Q$ are not probability measures, as illustrated by the following example. 

\begin{example} In the original GAN paper, the discrminative objective,

\[
\sup_f \int  \log f(x) dP(x) -  \int \log(1-f(x)) dQ(x),
\]
corresponds to $D_\psi(P|Q)$ with $\psi(s) = s \log s - (s+1) \log(s + 1)$ \citep{nowozin2016f}. If $P,Q$ are probability measures, this divergence is equivalent to the Jensen-Shannon divergence and is minimized when $P=Q$. If $P,Q$ are non-negative measures with unconstrained total mass, the divergence is minimized when $P = \infty$ and $Q = 0$.
\end{example}

When $P,Q$ are not probability measures, we require an additional constraint on $\psi$ to ensure that divergence minimization matchces $P$ to $Q$:

\begin{lemma}\label{lem:div} Suppose $P, Q$ are non-negative finite measures over $T \subseteq \mathbb{R}^n$. If $\psi(s)$ attains a unique minimum at $s=1$ with $\psi(1)=0$ and $\psi'_\infty > 0$, then $D_\psi(P|Q) = 0 \Rightarrow P=Q$. Otherwise, then $P \neq Q$ in general when $D_\psi(P|Q)$ is minimized.
\end{lemma}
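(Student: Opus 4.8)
The plan is to analyze the two terms in the definition~(\ref{eq:div}) of $D_\psi(P|Q)$ separately, using the Lebesgue decomposition $P = \frac{dP}{dQ}Q + P^\perp$. First I would establish the forward implication: suppose $\psi$ attains a unique minimum at $s=1$ with $\psi(1)=0$ and $\psi'_\infty > 0$. Since $\psi$ is convex with minimum value $0$ at $s=1$, we have $\psi(s) \geq 0$ everywhere, so both the term $\psi'_\infty P^\perp(\T)$ and the integrand $\psi(dP/dQ)$ are nonnegative; hence $D_\psi(P|Q)=0$ forces both to vanish. From $\psi'_\infty > 0$ and $\psi'_\infty P^\perp(\T) = 0$ we get $P^\perp(\T)=0$, i.e.\ $P \lll Q$. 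From $\int_\T \psi(dP/dQ)\,dQ = 0$ together with $\psi \geq 0$ we get $\psi(dP/dQ) = 0$ $Q$-almost everywhere; since $s=1$ is the \emph{unique} minimizer of $\psi$, this forces $dP/dQ = 1$ $Q$-a.e., hence $P = Q$.

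For the converse ("otherwise") direction I would exhibit, in each failure mode, a pair $P \neq Q$ at which $D_\psi(P|Q)$ attains its minimum. If $\psi$ does not attain its minimum uniquely at $s=1$ — either the minimizer is some $s_0 \neq 1$, or there is a flat interval, or $\psi(1) \neq 0$ — then take $Q$ any probability measure and $P = s_0 Q$ (or a scaling achieving a minimizing density value): this gives $P^\perp = 0$ and $\int \psi(s_0)\,dQ = \psi(s_0)$, which is the global infimum of the integral term, and this infimum is also a lower bound for $D_\psi$ over \emph{all} pairs with the same total mass ratio, so $P=s_0 Q \neq Q$ minimizes $D_\psi$. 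If instead $\psi'_\infty \leq 0$ (so $\psi'_\infty = 0$, the only $\leq 0$ value consistent with convexity and a finite minimum unless $\psi$ is decreasing, in which case one pushes mass to $+\infty$), then the term $\psi'_\infty P^\perp(\T) = 0$ regardless of $P^\perp$, so one can add an arbitrary singular component, or more simply take $P$ with $dP/dQ \to \infty$ on part of the space at no asymptotic cost; concretely, as in the GAN example preceding the lemma with $\psi(s) = s\log s - (s+1)\log(s+1)$ where $\psi'_\infty = 0$, one checks that $\inf D_\psi$ is approached by $P$ with unbounded mass and is not attained at $P=Q$.

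The main obstacle is the converse direction: "$P \neq Q$ in general when $D_\psi(P|Q)$ is minimized" is a slightly informal claim, and making it precise requires care about which of the several hypotheses on $\psi$ fails and exhibiting a clean counterexample in each case. In particular, when $\psi'_\infty = 0$ one must argue about an infimum that may not be attained by any finite measure, so the cleanest route is to reduce to the one-dimensional fact that $D_\psi(tQ \mid Q) = \psi(t)$ for a probability measure $Q$ and positive scalar $t$, together with the observation (via Jensen applied to the absolutely continuous part, plus handling of $P^\perp$) that $D_\psi(P|Q) \geq Q(\T)\,\psi\!\big(P(\T)/Q(\T)\big)$ whenever $\psi'_\infty \geq 0$; minimizing the right-hand side over the mass ratio then pins down exactly when the minimizer is forced to be $P=Q$. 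I would cite the conditions $\psi \geq 0$, lower semicontinuity, and convexity as the standing assumptions from~(\ref{eq:div}), and invoke Lemma~\ref{lem:psi} only implicitly (it is not needed here).
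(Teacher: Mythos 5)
Your proposal is correct and follows essentially the same route as the paper's proof: nonnegativity of $\psi$ and $\psi'_\infty>0$ force both terms of~(\ref{eq:div}) to vanish, giving $P^\perp=0$ and $dP/dQ=1$ $Q$-a.e., and the ``otherwise'' direction uses the same two counterexamples ($P=s'Q$ when the minimizer is not uniquely $s=1$, and $P=Q+P^\perp$ when $\psi'_\infty\le 0$). The additional Jensen-type lower bound and the discussion of unattained infima are harmless but unnecessary embellishments relative to the paper's argument.
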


\begin{proof} Suppose $\psi(s)$ attains a unique minimum at $s=1$ with $\psi(1)=0$, $\psi'_\infty > 0$, and $P \neq Q$ over a region with positive measure. It is straightforward to see by the definition in (\ref{eq:div}) that $D_\psi(P|Q) > 0$, since at least one of the two terms will be strictly positive. Therefore, the first statement holds. For the second statement, suppose either $\psi(s)$ does not attain a unique minimum at $s=1$ or $\psi'_\infty \leq 0$. If $\psi(s)$ attains a minimum at some $s' \neq 1$, then taking $P = s'Q$ results in a divergence that is equal to or less than $P = Q$. If $\psi'_\infty \leq 0$, then letting $P = Q + P_\perp$ where $P_\perp$ is a positive measure orthogonal to $Q$ results in a divergence that is equal to or less than $P=Q$.
\end{proof}

Table \ref{tab:div} provides some examples of $\psi$ corresponding to common divergences that can be used for unbalanced OT.

{\bf Choice of $f$.} According to Lemma \ref{lem:vc2}, $f$ should belong to a class of functions that maps from $\Y$ to $(-\infty, \psi_\infty']$. In practice, this can be enforced by parameterizing $f$ using a neural network with a final layer that maps to the correct range, also known as an output activation layer \citep{nowozin2016f}. Table \ref{tab:div} provides some examples of activation layers that can be used. 

\begin{table}
\tiny
\begin{center}
 \begin{tabular}{| c | c | c | c | c | c | c |} 
 \hline
 Name & $\psi(s) $ & $D_\psi(P|Q)$ & $\psi^*(s)$ & $\psi_\infty'$ & Activation Layer \\
 \hline
 Kullback-Leibler (KL) & $s \log s - s + 1$ & $\int \log\frac{dP}{dQ} dP - \int dP + \int dQ$ & $e^{s}$ & $\infty$ & N/A \\ 
 \hline
 Pearson $\chi^2$ & $ (s - 1)^2$ & $\int (\frac{dP}{dQ} - 1)^2 dQ$ & $\frac{s^2}{4} + s$ & $\infty$ & N/A\\ 
 \hline
 Hellinger & $ (\sqrt{s} - 1)^2$ & $\int (\sqrt{\frac{dP}{dQ}} - 1)^2 dQ$ & $\frac{s}{1-s}$ & $1$ & $1-e^s$ \\ 
 \hline
 Jensen-Shannon & $s \log s - (s+1) \log \frac{s+1}{2}$ & $\frac{1}{2}D_{KL}(P | \frac{P+Q}{2}) + \frac{1}{2} D_{KL}(Q | \frac{P+Q}{2})$ & $-\log(2-e^s)$ & $\log 2$ & $\log(2) - \log(1 + e^{-s}))$ \\ 
 \hline

\end{tabular}

\end{center}
 \caption{Table of some common $\psi$-divergences, associated entropy functions $\psi$, and convex conjugates $\psi^*$ for Algorithm \ref{alg:TAG}, partly adapted from \citep{nowozin2016f}.}
 \label{tab:div}
\end{table}

{\bf Choice of neural architectures.} For our experiments in Section \ref{sec:exp}, we used fully-connected feedforward networks with 3 hidden layers and ReLU activations. For $T$, the output activation layer was a sigmoid function to map the final pixel brightness to the range $(0,1)$. For $\xi$, the output activation layer was a softplus function to map the scaling factor weight to the range $(0, 
\infty)$.

{\bf Gradient penalties.} The training of GANs using alternating stochastic gradient descent is not guaranteed to converge locally \citep{mescheder2018training}. Stability can be improved using regularization in the form of added instance noise \citep{arjovsky2017towards} or gradient penalties \citep{gulrajani2017improved}. We observed that enforcing gradient penalities on $\xi$ and $f$, while not necessary, improved the stability of Algorithm~\ref{alg:TAG}. A gradient penalty on $\xi$ effectively restricts the search to sufficiently smooth $\xi$, which is reasonable in practice considering that similar regions of the source measure often have similar rates of growth or contraction. A gradient penalty on $f$ changes the nature of the relaxation from (\ref{eq:ubOT_exact}) to (\ref{eq:ubOT}): the right-hand side of (\ref{eq:var-div}) is no longer equivalent to the $\psi$-divergence, but is rather a lower-bound with a relation to bounded Lipschitz metrics \citep{gulrajani2017improved}. In this case, while the problem formulation is not equivalent to optimal entropy-transport, it is still a valid relaxation of unbalanced Monge OT in (\ref{eq:ubOT_exact}).

{\bf Choice of $\lambda$.} One can take $\lambda$ to be the standard Gaussian measure if a stochastic mapping is desired, similar to \cite{almahairi2018augmented}. If a deterministic mapping is desired, then $\lambda$ is set to a deterministic distribution. 

{\bf Improved training dynamics.}
Recall that the objective function for our alternating gradient updates is
\begin{align*}
   \ell(\theta, \phi, \omega):= \frac{1}{n} \sum_{i=1}^n [ c_\mu c_1(x_i, T_\theta(x_i, z_i))\xi_\phi(x_i) &+ c_\mu c_2(\xi_\phi(x_i)) \\
   &+ c_\mu \xi_\phi(x_i) f_\omega(T_\theta(x_i, z_i)) - c_\nu \psi^*(f_\omega(y_i)) ]. \nonumber
\end{align*}
Early in training, $\xi_\phi$ can become very small for some $x_i$ as none of the transported samples resemble samples from the target distribution. As a result, $T_\theta$ may improve very slowly for some inputs $x_i$. One way to address this issue without changing the fixed point is to update $\xi_\phi, f_\omega$ using the above objective and update $T_\theta$ using the following objective.
\begin{align*}
   \ell(\theta, \phi, \omega):= \frac{1}{n} \sum_{i=1}^n c_\mu [ c_1(x_i, T_\theta(x_i, z_i)) + f_\omega(T_\theta(x_i, z_i))] \nonumber
\end{align*}
Note that we have omitted terms in the original objective that do not include $T_\theta$, and which therefore do not affect the gradient update. For the terms that remain, the difference is that we are rescaling the contribution of each sample $x_i$ by $1/\xi(x_i)$. As long as $\xi(x_i) > 0$, this has the effect of rescaling the gradient update of the loss function with respect to each $T_\theta(x_i, z_i)$ \emph{without} changing the direction of the update.


\end{document}